\documentclass{article}

\usepackage[nonatbib, final]{neurips_2021}
\usepackage[numbers, sort]{natbib}
\usepackage[utf8]{inputenc}
\usepackage[T1]{fontenc}   

\usepackage{hyperref}      
\usepackage{url}            

\usepackage{booktabs}
\usepackage{amsfonts}    
\usepackage{nicefrac} 
\usepackage{microtype}     
\usepackage{xcolor}         
\usepackage{graphicx}

\usepackage{amsmath, amsthm, amssymb}
\usepackage{thmtools,thm-restate}
\usepackage{bm}

\usepackage{csquotes}
\usepackage{subfig}
\usepackage{placeins}

\newtheorem{proposition}{Proposition}

\DeclareMathOperator*{\argmin}{arg\,min}

\newcommand{\hatL}[1]{\hat{\mathcal{L}}^{\mathsf{#1}}}

\newcommand{\ipk}[2]{\exp{\langle #1, #2\rangle /\tau }}
\newcommand{\norm}[1]{\left\lVert#1\right\rVert}

\newcommand{\rev}[1]{#1}

\newcommand\blfootnote[1]{%
  \begingroup
  \renewcommand\thefootnote{}\footnote{#1}%
  \addtocounter{footnote}{-1}%
  \endgroup
}

\title{Inverse Problems Leveraging \\ Pre-trained Contrastive Representations}

\author{%
    Sriram Ravula\thanks{Equal contribution.}\\
    The University of Texas at Austin\\
    Electrical and Computer Engineering\\
    \texttt{sriram.ravula@utexas.edu} \\
    \And
    Georgios Smyrnis\footnotemark[1]\\
    The University of Texas at Austin\\
    Electrical and Computer Engineering\\
    \texttt{gsmyrnis@utexas.edu} \\
    \And
    Matt Jordan \\
    The University of Texas at Austin\\
    Computer Science\\
    \texttt{mjordan@cs.utexas.edu} \\
    \And
    Alexandros G. Dimakis \\
    The University of Texas at Austin\\
    Electrical and Computer Engineering\\
    \texttt{dimakis@austin.utexas.edu} \\
}

\begin{document}

\maketitle

\begin{abstract}
    We study a new family of inverse problems for recovering representations of corrupted data. We assume access to a pre-trained representation learning network R(x) that operates on clean images, like CLIP. The problem is to recover the representation of an image R(x), if we are only given a corrupted version A(x), for some known forward operator A. We propose a supervised inversion method that uses a contrastive objective to obtain excellent representations for highly corrupted images. 
    Using a linear probe on our robust representations, we achieve a higher accuracy than end-to-end supervised baselines when classifying images with various types of distortions, including blurring, additive noise, and random pixel masking. We evaluate on a subset of ImageNet and observe that our method is robust to varying levels of distortion. Our method outperforms end-to-end baselines even with a fraction of the labeled data in a wide range of forward operators.  
\end{abstract}

\blfootnote{Code available at \url{https://github.com/Sriram-Ravula/Contrastive-Inversion}.}

\section{Introduction}

Modern representation learning networks like CLIP~\cite{clip} are showing incredible performance for image classification, even for zero-shot problems with labels not seen during training. Training these encoders comes at a staggering cost and requires datasets and computing resources only available to very few organizations.  In this paper we show how to leverage this pretrained power for a new family of problems in the presence of image corruptions or other types of measurements. 

Inverse problems involve reconstructing an unknown vector $x$ from measurements $y= A(x)$. Typically, the forward operator $A$ corrupts the unknown vector $x$ and reduces its dimension, i.e. the observations $y$ live in a lower-dimensional space compared to $x$. In the special case of linear inverse problems, the forward operator is simply a matrix and the measurements are in the form $y= A x + \textrm{noise}$. Special cases of linear inverse problems include image denoising, inpainting, super-resolution, compressed sensing used in medical tomography, seismic geological imaging and many others, see e.g.~\cite{ongie2020deep} for a recent overview.

In this paper, we introduce the study of a new family of inverse problems: reconstructing the representation of an image given a corrupted or measured input. Formally, if a (clean) image is $x$ and its CLIP representation is $R(x)$, we would like to obtain that representation by only observing a highly corrupted input $A(x)$. This is impossible if the forward process $A$ removes information needed to obtain the representation. Surprisingly, \rev{we show that we can recover representations that are useful for downstream tasks, even from extremely corrupted versions of the image.}

We introduce a robust encoder $S$ that is trained to imitate the behavior of the pretrained CLIP encoder acting on clean images $x$. 
However, the input to the robust encoder is only corrupted images $A(x)$ that are created by applying the forward operator on $x$. 
Our approach is illustrated in Figure \ref{fig:training_overview}.
The teacher encoder is the pretrained CLIP, and the student encoder is our robust encoder operating on corrupted images.
Formally, the robust encoder $S(A(x))$ is trained to approximate $R(x)$ using a contrastive loss.

\rev{Many applications, such as object recognition with low-cost cameras, remote sensing and aerial imaging rely on noisy or blurry data
and can face occlusions or sensor corruptions. As we demonstrate in our experiments in the Appendix, normal CLIP fails on highly corrupted images. Our procedure allows us to transfer the power of CLIP to  heavily corrupted images in downstream tasks, with relatively little extra training.}

\begin{figure*}[t]%
    \centering
    \subfloat{{\includegraphics[width=0.675\linewidth]{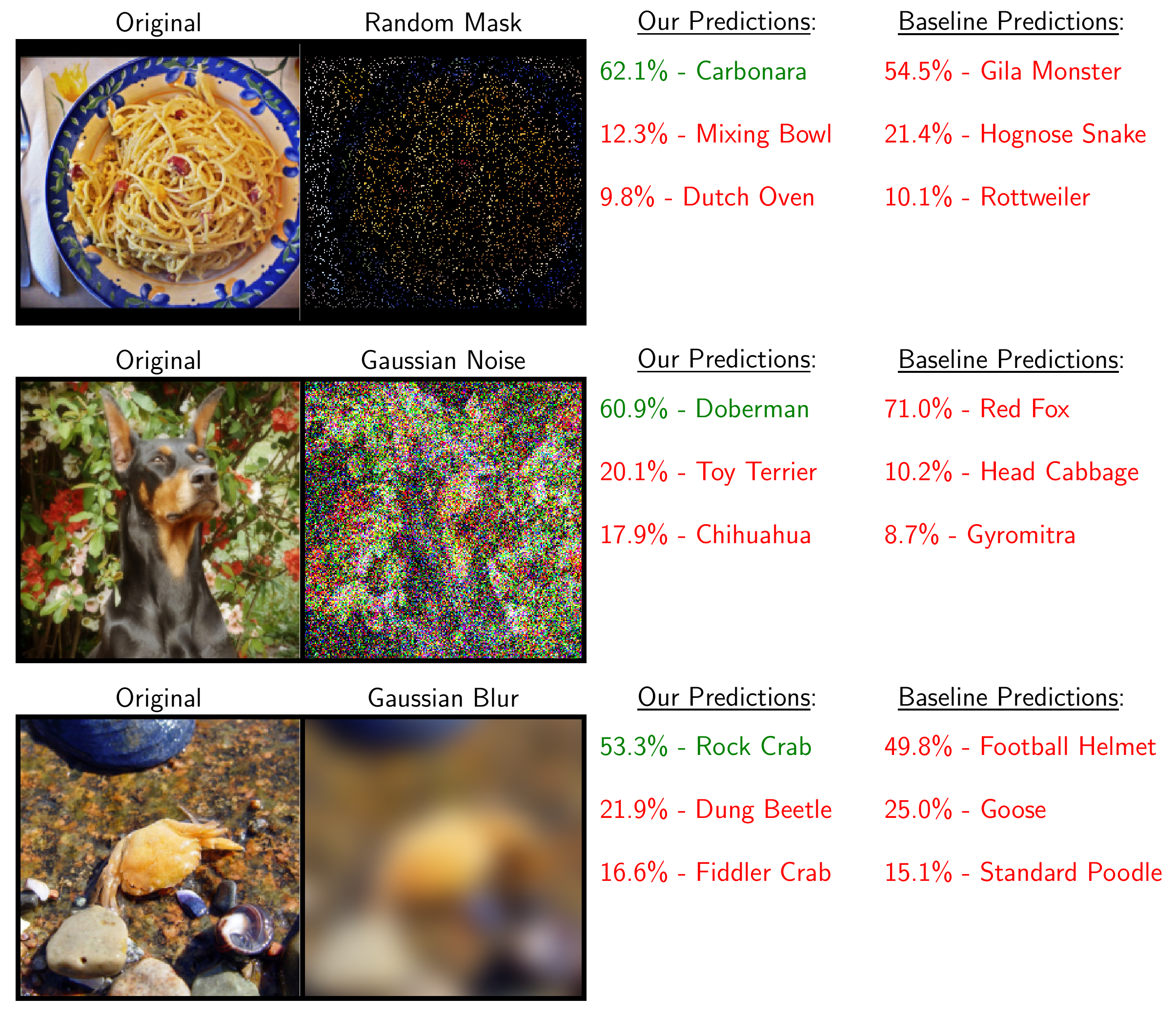} }}
    \caption{\small \textbf{Predictions of our models vs. supervised baselines for corrupted images.} 
    Our robust encoders observe highly corrupted images and use a simple linear probe to classify in ImageNet-100 labels. 
    We present the top 3 classes from our models as well as those from the end-to-end supervised baselines trained with the same amount of labeled data, for select images. For three different types of forward operators (90 percent missing pixels, strong Gaussian noise and high Blurring) our robust encoders classify correctly and also produce reasonable top 3 alternatives. On the contrary, the supervised baselines completely fail even though they were fine-tuned on exactly this task to classify corrupted images, starting from a powerful ImageNet pretrained ResNet-101. We also expect that most humans would fail to classify such highly corrupted images -- more examples are included in the Appendix. }
    \label{fig:demo}
\end{figure*}

\begin{figure*}[t]
    \centering
    \includegraphics[width=0.7\textwidth]{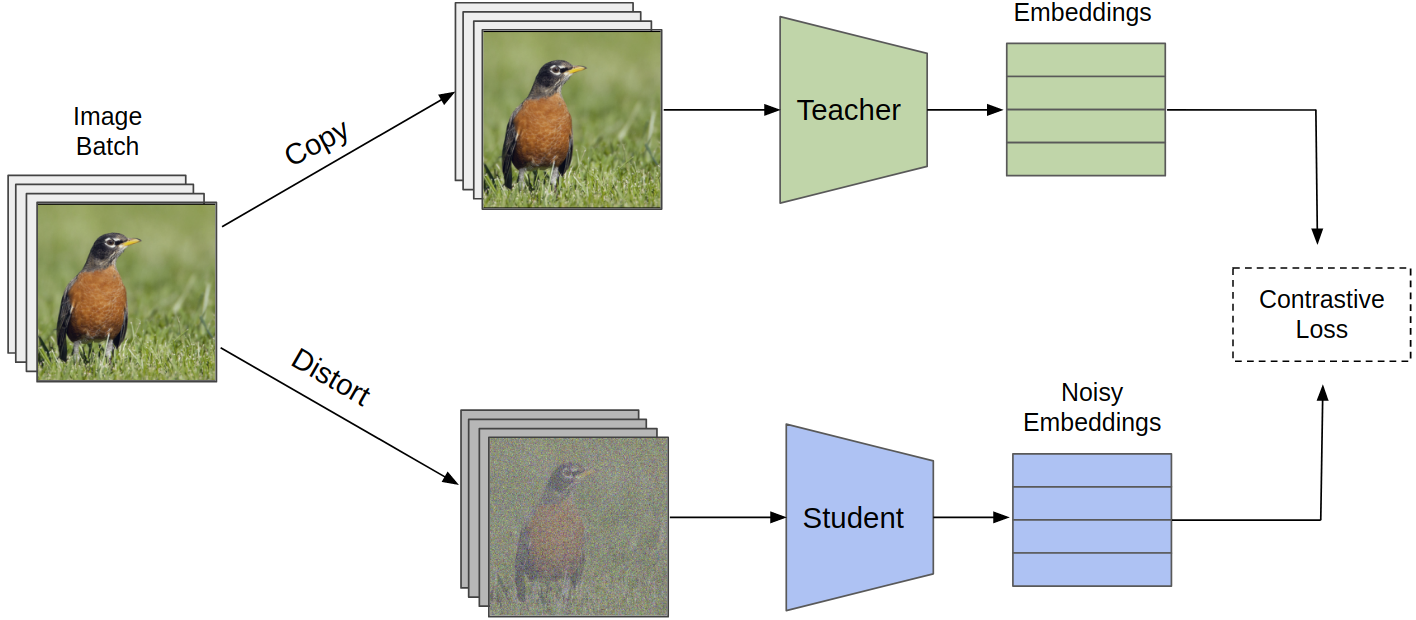}
    \caption{\small \textbf{Overview of our proposed method.} We initialize a student and teacher model from a pretrained CLIP encoder. Clean image batches are fed to the teacher while distorted versions of those images are fed to the student. The student is trained using a contrastive loss which makes student and teacher representations of the same original images more similar while making their representations of different images less similar. }
    \vspace{-1.0em}
    \label{fig:training_overview}
\end{figure*}

\subsection{Results}
We show that our method is able to obtain useful representations even under extreme corruptions such as removing $90\%$ of the pixels as shown in the top panel of Figure~\ref{fig:demo}. The highly corrupted images enter our robust encoder and the obtained representation is used in a linear classifier to produce ImageNet-100 labels. Our main result is that our method outperforms 
a pretrained ResNet (of the same size as our robust encoder) fine-tuned end-to-end on labeled distorted images.

\textbf{Using less labeled data} For some corruption levels, we are able to outperform end-to-end fine-tuned ResNets using as little as 10\% of labeled samples. This is even when the fine-tuned baseline uses $100 \%$ of ImageNet-100 labels for training. The primary advantage of our model is that our robust encoder observes representations from the pretrained CLIP encoder, which was trained with a much bigger dataset compared to ImageNet. Still, the fact that this implicit advantage in the representations and $10 \%$ of labeled data is sufficient to outperform a supervised trained ResNet fine-tuned with $10$ times more labeled data is very surprising and illustrates the power and versatility of pretrained representation learners.  

\textbf{Robustness to noise and data shifts}
Our method is very robust to changes in the forward operators, data statistics and label shifts. 
We experiment with three classes of forward operators: random pixel masking, additive Gaussian noise, and Gaussian blurring distortions. 
For each, we train and test on a wide-range of distortion levels, from slight to severe corruption. We show that our robust encoder produces useful representations even when the level of corruption is outside its training domain. 

We show that our representations are useful for a wide range of tasks, without requiring knowledge of the task when the robust encoder is trained. 
We illustrate excellent classification accuracy across five datasets, frequently outperforming end-to-end supervised baselines trained with knowledge of the target task. Our experiments include a chest X-ray COVID pneumonia task which has very different morphology compared to ImageNet. Surprisingly, the same universal representations, combined with a custom linear probe are very successful across all tasks. 

\textbf{Contrastive versus MSE training}
We formulate the contrastive student training method as a regularization upon the simple mean squared error loss between student embeddings of a distorted image and teacher embeddings of the clean version of that image. We analyze the effects of this regularization on the training dynamics. Our results empirically show that simple MSE is worse in most cases, further strengthening our argument for the usefulness of contrastive learning in this setting.

\section{Related Work}

\paragraph{Robust Image Recognition} It has been shown that classifiers that perform well on clean image tasks are not robust to common image distortions \cite{google}. Several datasets have been proposed specifically to benchmark generalization of classifier performance to natural distortions \cite{robust_benchmark, cure}. Related to our work, \cite{distorted_class} and  \cite{human} fine-tune pretrained classifiers on distorted data, which seems to yield better performance than end-to-end training. However even under these training processes, modern classifiers exhibit inferior performance to human vision on distorted data \cite{human}.

\paragraph{Inverse problems}
There is significant recent literature on solving inverse problems including denoising, inpainting and deconvolution for deblurring. 
While classical techniques rely on sparsity based priors e.g. \cite{lustig2007sparse,scarlett2016limits}, recent techniques include data-driven deep-learning methods \cite{asim2019invertible,darestani2021measuring,gomez2019fast,pandit2019inference} as well as combinations of sparsity and generative methods~\cite{dhar2018modeling} and untrained deep nets~\cite{heckel2018deep}.
Our work focuses on recovering the representation of an image, as opposed to an image itself, so it is also related to task-aware sensing~\cite{kabkab2018task}. However, our approach is fundamentally different from all previous inverse problems \rev{since instead of trying to reconstruct the image $x$, we aim to reconstruct a representation $R(x)$, which lives in a different space.}
The other important distinction is that even though our corruption processes are linear in the pixel space, they are non-linear with respect to the representation vector we try to recover. In effect, we are solving a non-linear inverse problem in a supervised way using a contrastive loss. In the Appendix, we compare to methods that attempt to solve the inverse problem in pixel space, then apply a classifier on the recovered image.

\paragraph{Contrastive Representation Learning} 
Self-supervised representation learning has recently exploded in popularity, largely due to the success of constrastive losses in learning representations from unlabeled data \cite{simclr, coding2, simclrv2, contrastive_non-parametric, moco, swav}. These techniques are able to generate highly general embeddings of images that are effective for many types of downstream tasks, even on domains that were not explicitly considered in training \cite{contrastingcontrast}. Contrastive losses generally operate based on a simple push-pull principle: images desired to be close in embedding space are pushed together, while unrelated images are pulled apart. 
One particularly popular choice of contrastive loss is the InfoNCE loss, derived from techniques for noise contrastive estimation \cite{nce}, and popularized in the self-supervised setting in \cite{coding}. 
Several works have considered adversarial training frameworks to yield representations that are more robust to adversarial attacks \cite{advcontr2, advcontr3, robust}. However these works only consider adversarial robustness and not robustness to common corruptions. Our approach is similar to that of \cite{supcon}, which employs a variant of the InfoNCE loss for a supervised setting. This differs from our work in that we exclusively focus on robustness to natural corruptions. \rev{Moreover, our contrastive training step can be performed without any task-specific labeled data. This is made possible by the use of the powerful embeddings provided by CLIP, and allows the embeddings to be used for downstream tasks on multiple datasets.}

\paragraph{Knowledge Transfer Methods} Our work is closely related to prior works which aim to distill, reduce, or transfer the knowledge from one network to another for a specific task \cite{knowledge_distillation, noisy_student, pseudolabel, modelcompression}. 
Of note is \cite{contrastive_distillation}, where the authors use a contrastive objective to transfer representations from a teacher network to a student network. 
Our work diverges in that we do not transfer from a larger, more powerful teacher to a smaller student, but rather transfer between a teacher and student of the same architecture initialized from the same weights. In addition, although the authors test on cross-modal transfer tasks such as transferring between color channels, transferring representations between clean and distorted images is a different task: we try to extract the same high-level information from \emph{less data} as opposed to \emph{different, but related data}.    
 
\section{Method}\label{sec:method}
Our problem is to recover the embeddings of clean images when we only have access to highly corrupted versions of the images. The \emph{encoder} $R(\cdot)$ is assumed to yield high-quality representations for a variety of domains. The \emph{distortion process} $A(\cdot)$ is assumed to be a known forward operator that greatly distorts images. We assume $A(\cdot)$ is sufficiently severe as to inhibit the performance of the encoder $R(\cdot)$, but not so severe that recovery is impossible. From a collection of input images $\{x_i\}_{i=1}^N$, we are only given access to the distorted images and the representations of the clean images $\{A(x_i), R(x_i)\}_{i=1}^N$. Our approach is to learn a student function $S$ so that $S(A(\cdot))$ is equally useful as the teacher representation $R(\cdot)$. We measure the utility of a representation by  the performance on an unspecified downstream supervised learning task. 

\paragraph{Least squares Loss} One potential approach for the task of recovering the teacher's embeddings from the corrupted inputs is to minimize the expected $\ell_2$ distance in embedding space between the clean teacher embeddings $R(\cdot)$ and the predicted student representations $S(A(\cdot))$. If both $R(\cdot)$ and $S(\cdot)$ are constrained to have $\ell_2$-normalized outputs, then the empirical least squares loss becomes: 
\begin{equation}
\hatL{MSE}(S;R,A) := \frac{-1}{N}\sum_{i=1}^N\langle S( A (x_i)), R(x_i)\rangle,
\end{equation}
where we have expanded and subtracted out constant terms. However, this process may not yield the most effective embedding of the corrupted data. Due to either limitations in the training process, the severity of the distortion process, or the generalization properties of the approximate minimizers of $\hatL{MSE}$, the learned embeddings $S(A(\cdot))$ may not be as useful for downstream tasks as embeddings learned using the other losses we consider. 

\paragraph{Contrastive Loss} 
Inspired by recent advances in self-supervised representation learning, we learn $S$ by minimizing a contrastive loss. We consider the following variant on the popular InfoNCE loss:
\begin{equation}
\label{eq:loss-nce}
\hatL{contr}(S; \tau, R, A) := \frac{-1}{N}\sum_{i=1}^N \log \frac{\exp(K(i,i) / \tau)}{\sum_{j=1}^N \exp(K(i,j) / \tau)}, 
\end{equation}
where $K(i,j) := \langle S(A(x_i)), R(x_j)\rangle $ measures the similarity between the learned embedding of $A(x_i)$ and the clean embedding of $x_j$, and $\tau$ is a temperature hyperparameter. We follow \cite{intriguing, hypersphere} and rewrite $\hatL{contr}$ in terms of explicit `pull` and `push` terms as : 

\begin{equation}
\label{eq:loss-contrast}
\hatL{contr}(S ; \tau, R, A) = \frac{1}{\tau}\hatL{MSE}(S ; R,A) + \hatL{unif}(S ; \tau, R, A),
\end{equation}
where the second term, referred to as the uniformity term, is defined as: 
\begin{equation} 
\label{eq:loss-uniform}
\hatL{unif}(S; \tau, R, A) := \frac{1}{N} \sum_{i=1}^N \log \sum_{j=1}^N e^{K(i,j)/\tau}.
\end{equation}
The first term of $\hatL{contr}$ is simply $\hatL{MSE}$ which encourages alignment of $S(A(x_i))$ with $R(x_i)$, and the $\hatL{unif}$ term encourages the learned representations for corrupted data to be dissimilar from all other representations of clean data.

The primary difference between Equation \ref{eq:loss-contrast} and the InfoNCE loss commonly used in self-supervised learning is the choice of the similarity measure $K(\cdot, \cdot)$. Without access to specified target embeddings, $K(i,j)$ is typically chosen to be the inner product between projections of the embeddings of $x_i$ and $x_j$, (or between $x_i$ and a positive example in the case of $K(i,i)$). Here, the uniformity term is necessary to prevent representation collapse, where in our setting $\hatL{MSE}$ alone suffices to prevent this degenerate case. We note that alternative choices for $K(\cdot, \cdot)$ in $\hatL{unif}$ may be employed. For example, we could also contrast the student embeddings $S(A(\cdot))$ across two different images. We ablate against other choices in the experimental section and find that our choice of $K(\cdot, \cdot)$ is one of several that exhibits comparable performance. 

\paragraph{Effect of uniformity term} To examine the effect of the uniformity term of the loss on the training dynamics, we consider the gradients of $\hatL{contr}$ with respect to the parameters of the encoder $S$. We first decompose $\hatL{unif}$ to consider the contributions of each individual data point:
\[\hatL{unif}(S; \tau, R, A) = \frac{1}{N} \sum_{i=1}^N \hatL{unif}_{i}(S; \tau, R, A), \quad \hatL{unif}_{i}(S; \tau, R, A):= \log \sum_{j=1}^N e^{\frac{K(i,j)}{\tau}}.\]
Then the gradient of $\hatL{contr}$ with respect to the parameters of $S$ may be written as 
\[
\nabla_S \hatL{contr}(S; \tau, R, A) = \frac{-1}{\tau}\nabla_S \hatL{MSE}(S; R, A) + \frac{1}{\tau N}\sum_{i=1}^N \sum_{j=1}^{N} w_i(j) \nabla_S K(i,j), 
\]
where 
\[w_i(j) := \exp{\left(K(i,j)/\tau-\hatL{unif}_i(S;T,R,A)\right)}.\]

This can be interpreted as follows. The weighting of the first term by $\tau^{-1}$ balances the gradients $\nabla_S K(i,i)$ and $\sum_{j} \nabla_S K(i,j)$, as is common with other choices of contrastive losses (c.f. Theorem 2 in \citep{dualdeep}). Noting that for each $i$, the weights $w_i(j)$ sum to 1, the gradient of each individual uniformity term $\hatL{unif}_i$ is a convex combination of the gradients of the similarity terms $K(i, \cdot)$. The individual similarity terms are weighted exponentially proportionally to each $K(i,j)$'s contribution to the total uniformity loss. As $\tau$ decreases, the weights place greater emphasis on terms that are most similar. As $\tau$ is commonly chosen to be less than 1, the dynamics automatically reflect the influence of the `hardest' negative examples. 

\paragraph{When is perfect recovery possible?} Finally, we describe conditions which allow for perfect recovery of the clean embeddings from the corrupted images in the training set. The first condition is that the corruption process should not be too destructive: it suffices to assume that the implication $A(x_i)=A(x_j)\implies R(x_i)=R(x_j)$ holds for all $x_i,x_j$ in the training set, i.e., there exists a function which attains exact recovery. In this case, any $S^*$ that minimizes $\hatL{MSE}$ has $S^*(A(x_i))=R(x_i)$ for all $x_i$ in the training set. To argue for the same recovery guarantees when optimizing $\hatL{contr}$, we need to assume that the teacher $R(\cdot)$ provides a well-separated embedding of the training data. 
\begin{proposition}
If $R$ is a minimizer of the uniformity term 
\[ R\in \argmin_f \sum_{i=1}^N\log \sum_{j=1}^N \exp{\frac{\langle f(x_i), f(x_j)\rangle}{\tau}},\] 
then any encoder $S^*\in \argmin_S \hatL{contr}(S; \tau, R, A)$ exactly recovers the target embedding, $S^*(A(x_i)) =R(x_i)$ for all $x_i$ in the training set.
\end{proposition}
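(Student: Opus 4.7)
The plan is to exploit the decomposition $\hatL{contr}(S; \tau, R, A) = \frac{1}{\tau}\hatL{MSE}(S; R, A) + \hatL{unif}(S; \tau, R, A)$ and argue that the candidate encoder $S^\star$ defined by $S^\star(A(x_i)) = R(x_i)$ simultaneously minimizes both summands. The preceding well-posedness assumption $A(x_i)=A(x_j)\Rightarrow R(x_i)=R(x_j)$ ensures this $S^\star$ is well-defined on the training set, providing a legitimate candidate.

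For the MSE term, Cauchy--Schwarz on $\ell_2$-normalized embeddings gives $\hatL{MSE}(S;R,A)\ge -1$ with equality iff $S(A(x_i))=R(x_i)$ for every $i$, so $S^\star$ uniquely minimizes the MSE part. Substituting $S^\star$ into the uniformity term yields $\hatL{unif}(S^\star;\tau,R,A) = \frac{1}{N}\sum_i \log\sum_j\exp(\langle R(x_i),R(x_j)\rangle /\tau)$, which by the proposition's hypothesis equals $\frac{1}{N}\min_f U(f)$ where $U$ denotes the self-uniformity functional. Since each per-example summand of $\hatL{contr}$ depends only on $a_i := S(A(x_i))$, the minimization decouples and it suffices to show that for every $i$ the unit vector $r_i := R(x_i)$ uniquely minimizes $\ell_i(a):=\log\sum_j\exp(\langle a,r_j\rangle/\tau)-\langle a,r_i\rangle/\tau$ on the sphere.

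The main obstacle is upgrading the global assumption on $R$ to this per-example statement. A first-order analysis shows $r_i$ is sphere-critical for $\ell_i$ precisely when $\bar r_i:=\sum_j p(i,j)r_j$ is parallel to $r_i$, where $p(i,j):=\exp(\langle r_i,r_j\rangle/\tau)/\sum_k\exp(\langle r_i,r_k\rangle/\tau)$. However, the KKT conditions for $R\in\argmin_f U(f)$ only deliver parallelism of the combined sum $\bar r_i+\tilde r_i$ (with $\tilde r_i:=\sum_k p(k,i)r_k$), not of each piece individually. Closing this gap will likely require leveraging the globality of the minimizer---for instance, arguing that at any minimizer of $U$ the partition functions $\sum_j\exp(\langle r_i,r_j\rangle/\tau)$ are constant in $i$, which forces $p(i,j)=p(j,i)$ and hence $\bar r_i=\tilde r_i$, yielding the desired alignment. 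A second-order check exploiting the convexity of log-sum-exp in $a$ then promotes the critical point to a global sphere-minimum, and combining with the strict Cauchy--Schwarz bound on the MSE term gives uniqueness of $S^\star$ as the minimizer of $\hatL{contr}$.
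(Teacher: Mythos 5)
Your setup matches the paper's up to the point of decoupling: since each $S_i := S(A(x_i))$ appears in exactly one summand, everything reduces to showing that $a=R(x_i)$ is the unique minimizer over the unit sphere of $\ell_i(a)=\log\sum_j\exp(\langle a,r_j\rangle/\tau)-\langle a,r_i\rangle/\tau$. (Your opening claim that $S^\star$ minimizes \emph{both} summands separately is not correct --- the uniformity term in isolation pushes $S_i$ \emph{away} from every $r_j$, including $r_i$ --- but you recover by passing to the combined per-example functional, so that is only a framing slip.) The genuine gaps are in how you finish. First, the bridge from ``$R$ globally minimizes the self-uniformity functional'' to ``$r_i$ is sphere-critical for $\ell_i$'' rests on the claim that the partition functions $Z_i=\sum_j\exp(\langle r_i,r_j\rangle/\tau)$ are constant in $i$ at any minimizer. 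That is a substantive structural statement about finite-$N$ uniformity minimizers which you do not prove and which does not follow from the KKT condition you derived (which only constrains $\bar r_i+\tilde r_i$); it is precisely the hard part of the argument, and it is left as an ``it will likely require'' remark. Second, even granting criticality, convexity of log-sum-exp does \emph{not} promote a sphere-critical point to a global sphere minimum: the sphere is nonconvex, and even a linear (hence convex) function restricted to it has critical points that are maxima. So neither global optimality nor the uniqueness demanded by the statement (``\emph{any}'' minimizer recovers $R_i$) is established, and uniqueness cannot be rescued by the Cauchy--Schwarz argument on $\hatL{MSE}$ alone, since a minimizer of $\hatL{contr}$ need not minimize $\hatL{MSE}$ by itself.

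The paper's proof avoids first-order conditions entirely. It imports from the hypothesis only the single consequence $\sum_j r_j=0$ (a known property of uniformity minimizers, cited from the uniformity/alignment literature) and then directly compares $\ell_i(S_i)$ with $\ell_i(r_i)$ for an arbitrary unit $S_i\neq r_i$ with $\langle S_i,r_i\rangle=1-\delta$: each inner product $\langle S_i,r_j\rangle$ is lower-bounded by $\langle r_i,r_j\rangle$ minus a $\delta$-dependent slack, and the zero-mean identity forces this slack to be non-tight for at least one $j$, so the log-sum-exp term drops by strictly less than the alignment term gains, giving $\ell_i(S_i)>\ell_i(r_i)$. To complete your argument you would either need to prove your symmetry claim about $Z_i$ (nontrivial and possibly false in general) or switch to this direct-comparison route built on the provable zero-mean property.
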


\rev{
\paragraph{Training procedure} Our final method consists of (1) the contrastive step, in which the student learns representations for the distorted images and (2) a fine-tuning step, where we train a linear classifier on top of the learned representations. During the second step, the student encoder is kept frozen.
} 
\section{Experiments}
\label{sec:experiments}

We show that the proposed training process recovers useful representations from corrupted inputs for a variety of forward operators, evaluating several different classification tasks. We start by evaluating the representation quality when the distortion process and data distribution are the same during both training and inference, examining the label-efficiency of our approach. 

Then we evaluate our approach when the severity of the distortions changes at test time, when the test data distribution differs from the one used during training, and when the labels are shifted. Finally, we ablate against alternative formulations of our loss function.

For all experiments, we \rev{perform contrastive training for} the robust encoder using a 100-class subset of ImageNet, which we refer to as ImageNet-100,  \citep{contrastive_multiview,imagenet} to reduce computational resources. Our target representations are attained from the CLIP ResNet-101. We find robust representations with contrastive learning and evaluate by training a linear classifier on top of the frozen representations. \rev{In all experiments, the distortions are applied randomly to each batch of images, and independently for each image in the batch. Where applicable, we report our results over 10 different instances of random corruptions applied on the evaluation images.} Our baselines are built on a ResNet-101 initialized with weights from supervised training on the full ImageNet dataset. The final fully-connected layer is replaced with a $100$-dimensional output. We fine-tune the whole model in a supervised fashion using distorted inputs and their correct labels from ImageNet-100. The baseline is trained for 25 epochs with a batch size of 64. Our robust encoder is trained for 25 epochs with a batch size of 256, and the linear probe on top of it is trained for 10 epochs. \rev{Additional experiments, as well as} further details on training and hyperparameter choices are discussed in the appendix and in our provided code.

\subsection{Known Data Distribution and Forward Operator}
\label{subsec:idid}
In this section, we evaluate the quality of the learned robust representations for classifying images from the validation set of ImageNet-100, using the same distortions during training and inference. These experiments demonstrate the usefulness of our method for vision inverse problems where the data distribution and forward operator are both known at training time. We also demonstrate the label-efficiency of our approach by evaluating our learned representation when only few labeled samples are available to train the linear classifier.

\paragraph{Setup} We train the robust contrastive model and the baseline as described above. Eight different distortion processes are examined: Gaussian blur with (kernel size, standard deviation) of (21, 5) and (37, 9); additive Gaussian noise with standard deviations 0.1, 0.3, and 0.5; and random pixel mask with 50\%, 75\%, and 90\% of the pixels missing. 
We evaluate our method against the baseline in top-1 accuracy on the validation set of ImageNet-100, under the same distortion used to train each model.
To demonstrate the label-efficiency of our method, we also train a linear probe with only 10\% of the labeled data, and for two of our models we train linear probes using various amounts of labeled data.

\begin{figure*}[t]%
    \centering
    \subfloat{{\includegraphics[width=0.45\linewidth]{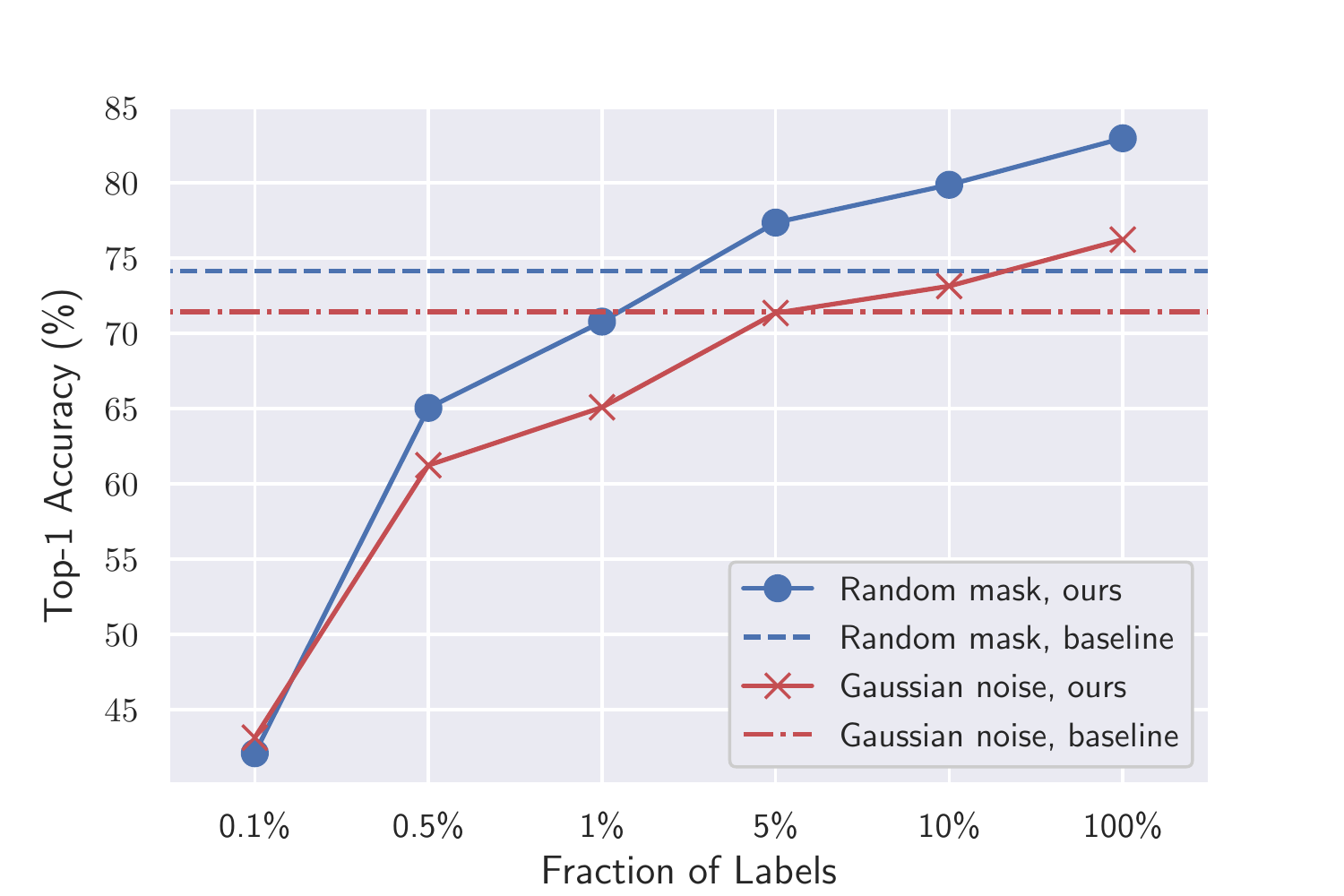} }}%
    \caption{\small \textbf{Accuracies using a varying fraction of labeled samples to train a linear probe.} We train robust encoders on images with 90\% random pixel masking and additive Gaussian noise with standard deviation 0.5, and fit a linear classifier on the learned representations using varying fractions of labeled training samples. We compare to a supervised baseline that uses all of the labeled training samples. Results are averaged over 10 random instantiations of corruptions on the ImageNet-100 validation dataset. We omit error bars as standard error is insignificant.}
    \label{fig:varying_num_labels}
    \vspace{-1em}
\end{figure*}

\paragraph{Results} In Table \ref{tab:base_results}, we see that training a linear probe on top of the representations learned by our procedure greatly improves accuracy compared to the supervised baseline. This further solidifies our original motivation. Furthermore, we can see that using only 10\%  of labeled samples is sufficient for our model to outperform the baseline in most cases. More fine-grained label-efficiency results can be seen in Figure \ref{fig:varying_num_labels}, where we show that in two of these cases, using just 5\% of the labeled data  makes our model outperform or be competitive with the baseline trained using all the labels.

\begin{table}[t]
\centering
\caption{\small \textbf{Top-1 accuracy (percent) on ImageNet-100}. The best accuracy for each distortion is bolded. Each model is trained using images with a fixed type of distortion. We train our robust CLIP encoder contrastively, then fit a linear probe on the learned representations using either all or 10\% of the labeled training samples. We report the mean and standard error for accuracy over 10 random instantiations of distortions on the ImageNet-100 validation dataset (Gaussian blur is deterministic, so we do not include standard error values). For Gaussian blur, n corresponds to the length of the blur kernel.}
\label{tab:base_results}
\begin{tabular}{@{}lcccc@{}}
\toprule
Distortion                    & \begin{tabular}[c]{@{}c@{}}Supervised\\ Baseline\end{tabular} & Ours                  & \begin{tabular}[c]{@{}c@{}}Ours\\ (10\% labeled data)\end{tabular} \\ \midrule
Random Mask 50\%              & 77.53$\pm$0.06                                                  & \textbf{85.87$\pm$0.08} & 82.19$\pm$0.08                                                       \\
Random Mask 75\%              & 75.68$\pm$0.06                                                  & \textbf{83.99$\pm$0.07} & 80.36$\pm$0.09                                                       \\
Random Mask 90\%              & 74.12$\pm$0.09                                                  & \textbf{82.96$\pm$0.08} & 79.87$\pm$0.12                                                       \\
Gaussian Noise $\sigma$ = 0.1 & 82.23$\pm$0.04                                                  & \textbf{84.46$\pm$0.08} & 80.99$\pm$0.09                                                       \\
Gaussian Noise $\sigma$ = 0.3 & 75.78$\pm$0.08                                                  & \textbf{81.30$\pm$0.07}  & 78.16$\pm$0.07                                                       \\
Gaussian Noise $\sigma$ = 0.5 & 71.43$\pm$0.14                                                  & \textbf{76.23$\pm$0.10}  & 73.14$\pm$0.08                                                       \\
Gaussian Blur n = 21          & 76.40                                                          & \textbf{83.24}        & 80.94                                                              \\
Gaussian Blur n = 37          & 68.94                                                         & \textbf{77.80}         & 74.84                                                              \\ \bottomrule
\vspace{-2.5em}
\end{tabular}
\end{table}

\subsection{Known Data Distribution and Unknown Forward Operator}
\label{subsec:noise_levels}
In many settings, the data distribution and the \emph{type} of distortion will be known at training time, but the \emph{severity} of the distortion will be unknown. We consider the case where the severity of the distortion at test-time is greater than what was seen during training. 

\paragraph{Setup} First, we train a model using training images with between 50 and 95\% of pixels randomly masked. In addition, we train a model using images with additive Gaussian noise with random standard deviation between 0.1 and 0.3. Once fully trained, we fit a linear classifier on top of the learned representations for each model using the same distortions. We also train two supervised baselines end-to-end with the same distortions. We evaluate the models trained with pixel masking on images with a fixed level of 96 to 99\% percent missing pixels, and the networks trained with noise on images with additive Gaussian noise using a fixed standard deviation between 0.35 and 0.5.

\paragraph{Results} In Figure \ref{fig:noise_levels}, we see that with an increase in noise levels, the accuracy of the models does decrease. However, the linear probe trained on the entire dataset achieves better results than the baseline end-to-end supervised model. This shows that our model is more robust, even when the distortions are greater than those expected during training.

\begin{figure*}%
    \centering
    \subfloat{{\includegraphics[width=0.45\linewidth]{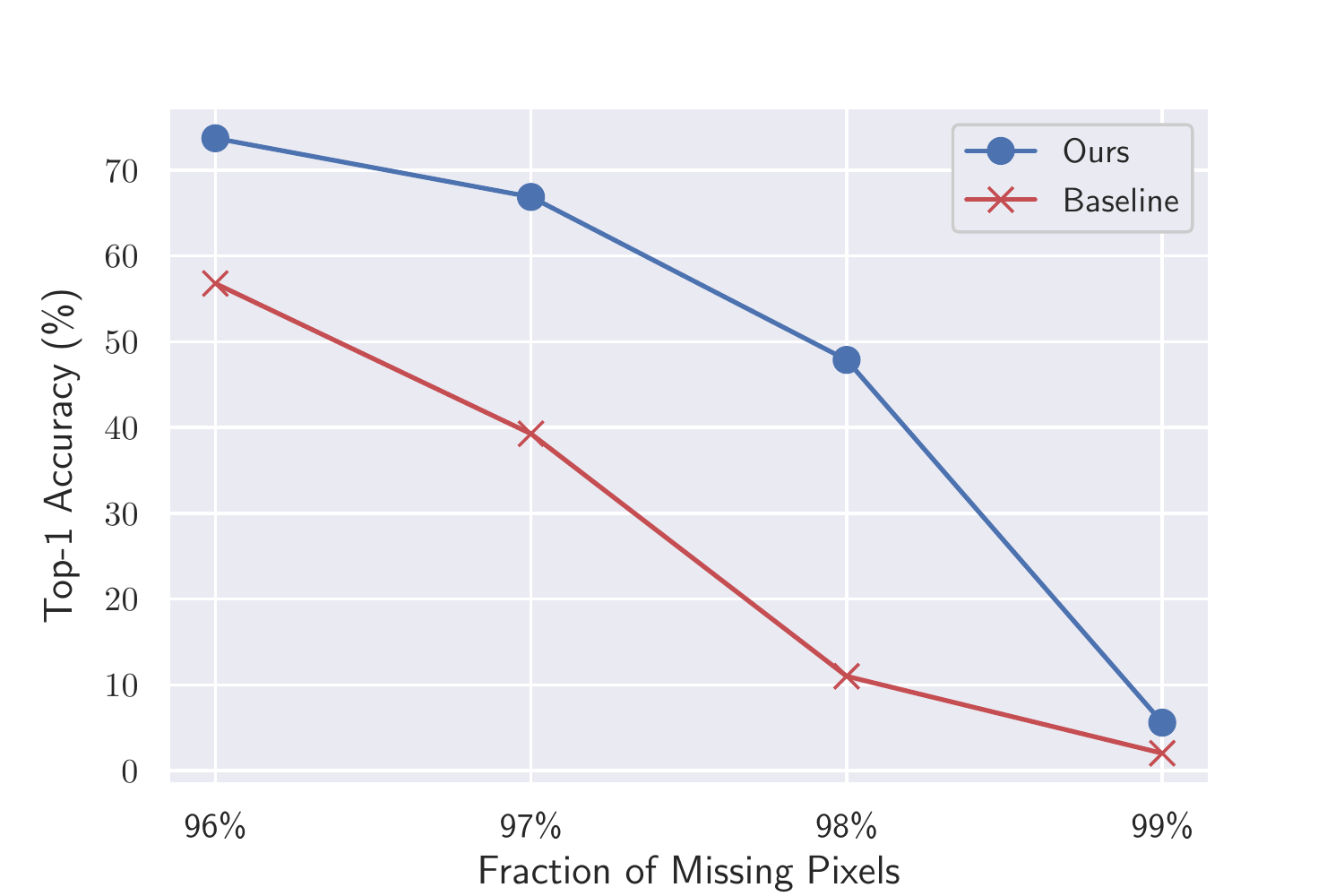} }}%
    \subfloat{{\includegraphics[width=0.45\linewidth]{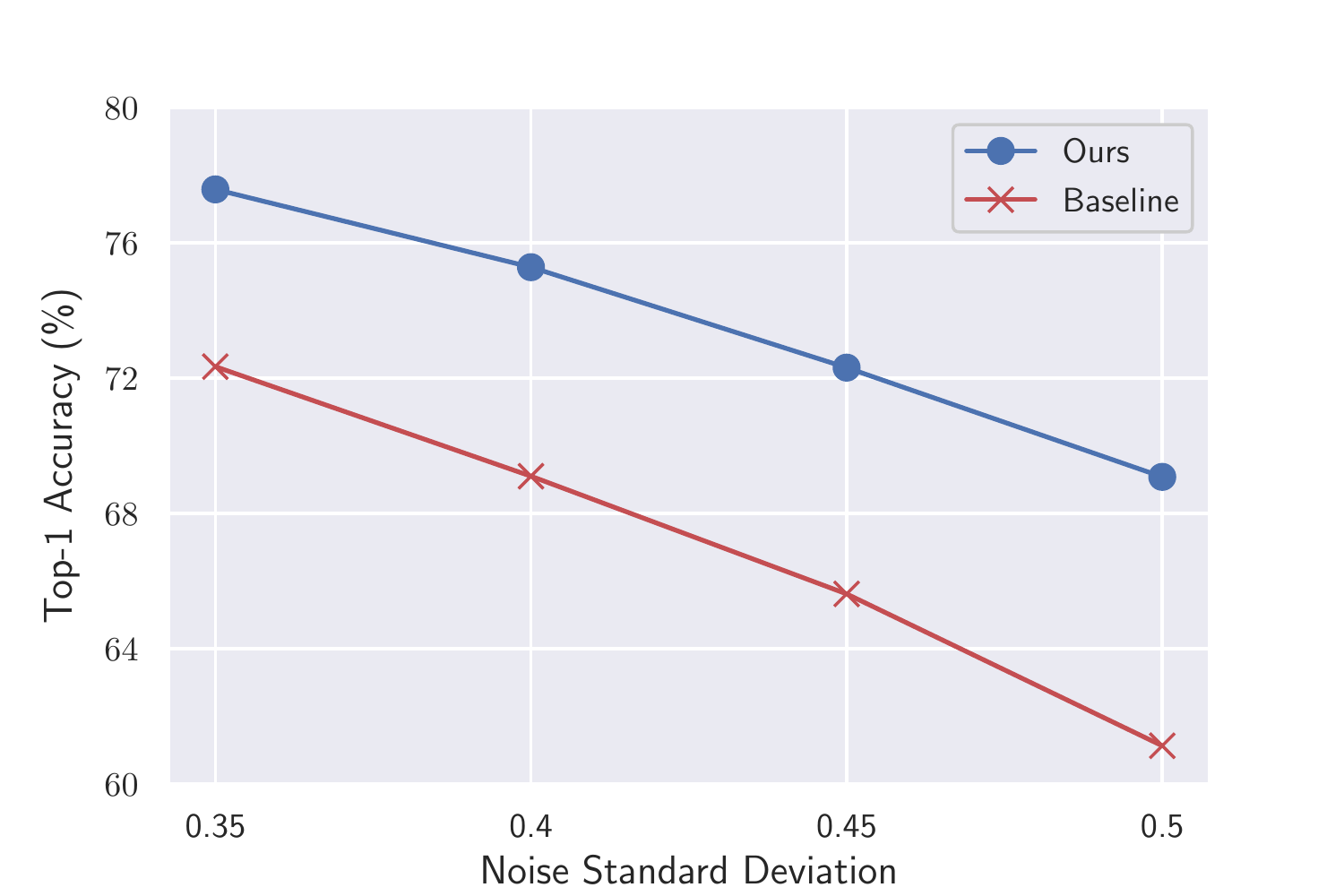} }}%
    \caption{\small \textbf{Accuracies for images with varying corruption levels using models trained on a range of levels.} In the left figure, we compare our robust model with a baseline, both trained on images with 50\% to 95\% random pixel masking. In the right figure, each model is trained on images with additive Gaussian noise with random standard deviation from 0.1 to 0.3. We evaluate the models on images with more severe corruptions than applied during training. Results are averaged over 10 random instantiations of corruptions on the ImageNet-100 validation dataset. We omit error bars as standard error is insignificant.}
    \vspace{-1.0em}
    \label{fig:noise_levels}
\end{figure*}

\subsection{Unknown Data Distribution and Known Forward Operator}
\label{subsec:transfer}
In this section we evaluate how well ImageNet pretraining with distortions allows the learned representations to transfer to different datasets. We use the same forward operator during training and inference to isolate the quality of the embeddings learned \rev{during the contrastive step} even in the presence of distortion.        

\paragraph{Setup} Five datasets are chosen to evaluate transferability of robust representations. (1) CIFAR-10 and (2) CIFAR-100 \citep{cifar10}, and (3) STL-10 \cite{stl10}. (4) The COVID-19 Chest X-ray dataset \citep{covid},
(5) We generate another random $100$-class subset of ImageNet \citep{imagenet} from the remaining 900 classes we did not use for ImageNet-100, which we refer to as ImageNet-100B.

We train the same models under the same distortions as outlined in Section~\ref{subsec:noise_levels}, then fit linear classifiers for the new datasets on top of the fixed representations. Preprocessing details for each dataset may be found in the Appendix. 
We calculate top-1 accuracy of each model for classifying distorted images from the validation or test set of each of the new datasets. The images are distorted using the same forward operators used to train the networks.

\begin{table}[t]
\caption{\small \textbf{Top-1 accuracies for transfer learning.} We fit a linear classifier for each dataset on top of the representations learned by the models from ImageNet-100. RM means the model was trained with random missing pixels, and GN means it was trained with additive Gaussian noise. Results are mean and standard errors over 10 realizations of the distortions during evaluation.}
\label{tab:transfer}
\centering
\begin{tabular}{@{}lccccc@{}}
\toprule
Model         & CIFAR-10              & CIFAR-100             & STL-10                & COVID X-ray            & ImageNet-100B         \\ \midrule
Baseline (RM) & 79.83$\pm$0.04          & 55.10$\pm$0.08           & 81.09$\pm$0.05          & 82.07$\pm$0.23          & 67.45$\pm$0.09         \\
Ours (RM)     & \textbf{80.93$\pm$0.04} & \textbf{58.55$\pm$0.09} & \textbf{89.44$\pm$0.05} & \textbf{84.01$\pm$0.20}  & \textbf{80.09$\pm$0.07} \\ \midrule
Baseline (GN) & \textbf{76.52$\pm$0.07} & 52.00$\pm$0.05             & 82.28$\pm$0.07          & \textbf{83.54$\pm$0.15} & 69.49$\pm$0.11          \\
Ours (GN)     & 76.19$\pm$0.09          & \textbf{52.51$\pm$0.09} & \textbf{86.30$\pm$0.08}  & 79.74$\pm$0.26          & \textbf{78.66$\pm$0.13} \\ \bottomrule
\end{tabular}
\end{table}

\paragraph{Results} Table \ref{tab:transfer} shows that our approach can achieve good results in a variety of datasets. In CIFAR-10 and CIFAR-100, we get results comparable to the baseline. In STL-10, we have greater top 1 accuracy for both noise settings. For the COVID X-ray dataset, we get mixed results, where we beat the baseline for the random masking model, but we lose for the additive Gaussian noise model. The most surprising result is the vast increase in accuracy for the alternative ImageNet-100B dataset. This shows that due to the supervised training of the baseline, some information from the labels leaks into the representation, leading to worse performance on a related, but ultimately different dataset.

\subsection{Label Shift}
\label{subsec:ood}
An important factor in determining the robustness of a model is how gracefully it fails in the presence of unseen data modalities at inference time. For instance, if a network trained to distinguish dogs from cars is shown an image of a cat, the network should produce a embedding which is more likely to be classified as a dog than a car. This is even more important for inverse problems: a single distorted image may result from the same forward operator being applied to any number of original images. If a distorted image from a class outside of the training classes makes a network output an unexpected or uninformative representation, then the network is likely also brittle to shifts in the data distribution \emph{within the classes it knows} at inference time. We evaluate the quality of representations produced by our method for distorted images from classes outside of the training dataset. 

\paragraph{Setup} We use the same models described in Section \ref{subsec:noise_levels}.
To simulate images from unseen classes, we identify 5 classes from the validation set of full ImageNet that were not used in our training data, but are similar to classes within our training data. We replace the labels of these new images with the label of the classes in our training data that are similar to them, as seen in Table \ref{tab:classes}.
Evaluation is done on images from these replacement classes, under fixed levels of distortion, with 50\% to 99\% missing pixels on the random masking model and 0.05 to 0.5 standard deviation for the Gaussian noise model.

\begin{table}[t]
\centering
\caption{\small \textbf{Label Shift: Out of Distribution ImageNet Classes and their New Labels.} We list the five ImageNet classes taken from outside of ImageNet-100 and the new labels given to them based on similar classes from within ImageNet-100.}
\label{tab:classes}
\begin{tabular}{@{}cc@{}}
\toprule
ImageNet Class & New ImageNet-100 Label \\ \midrule
Cup                   & Cocktail Shaker                  \\
Dungeness Crab        & Rock Crab                        \\
Mountain Bike         & Moped                            \\
Wood Rabbit           & Hare                             \\
French Bulldog        & American Staffordshire Terrier   \\ \bottomrule
\end{tabular}
\end{table}

\paragraph{Results} In Figure \ref{fig:ood_noise_levels}, we can see that, for the random pixel mask case, we consistently outperform the baseline across all noise levels. For the Gaussian noise case, our model has slightly lower top-1 accuracy for small noises. This can be explained by the fact that small distortions do not drastically alter the image, so the baseline is bolstered by its ImageNet pretraining. However, as the noise level increases, the baseline model degrades much more quickly.

\begin{figure*}[t]%
    \centering
    \subfloat{{\includegraphics[width=0.45\linewidth]{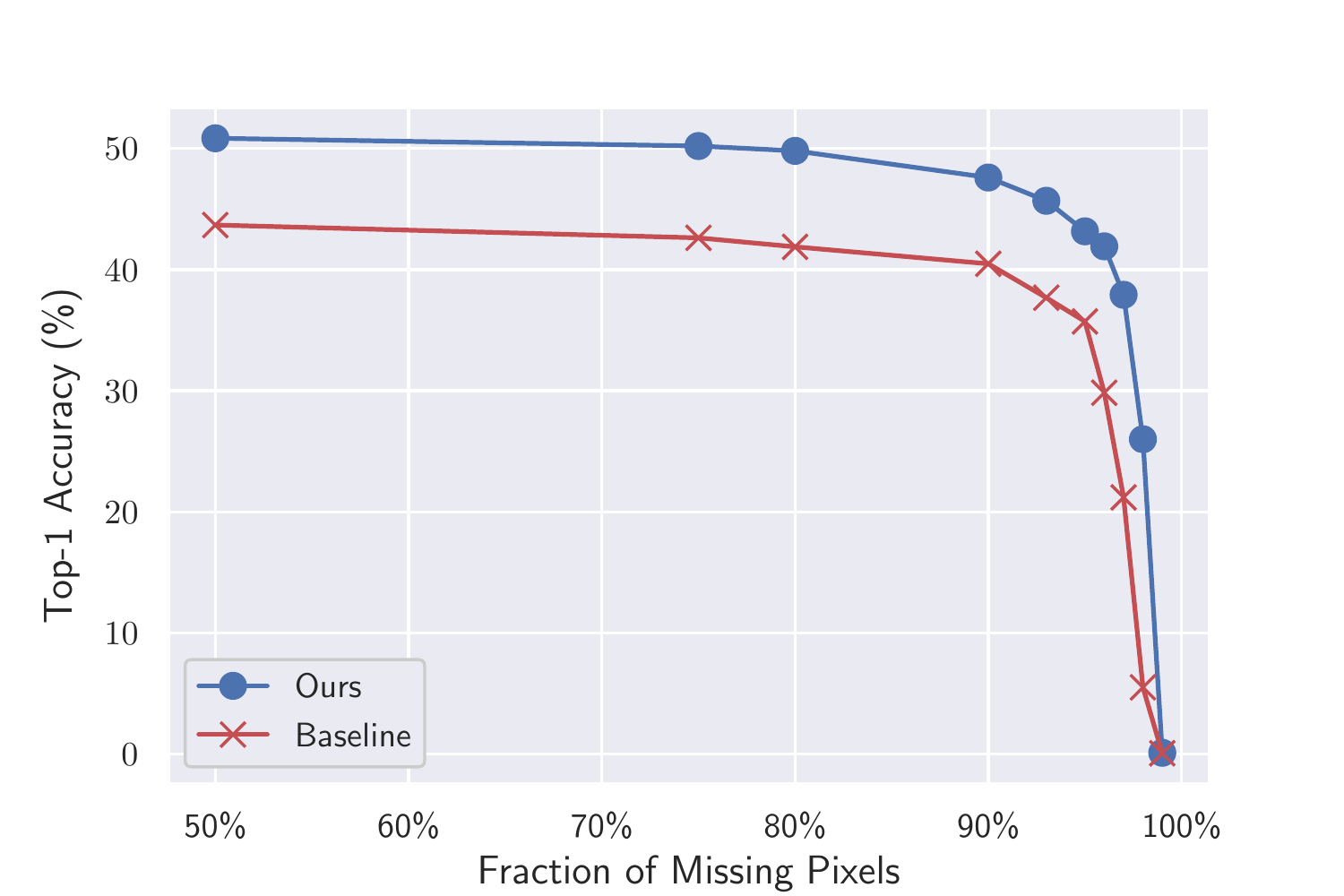} }}%
    \subfloat{{\includegraphics[width=0.45\linewidth]{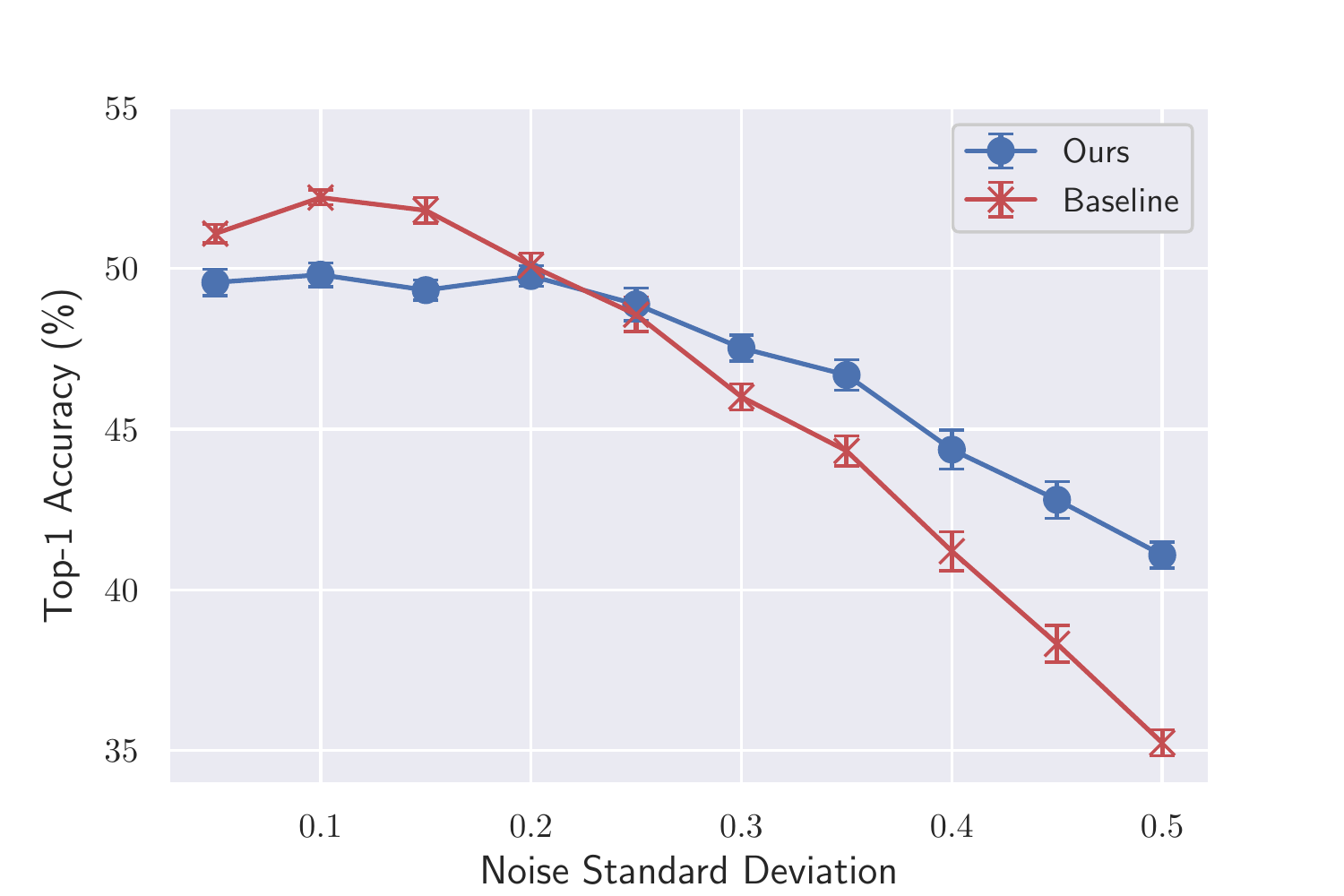} }}%
    \caption{\small \textbf{Accuracies for varying noise levels on unseen classes using label shift.} On the left, we see the model trained on 50\% to 95\% random masking of pixels, while on the right the model trained on Gaussian noise with standard deviation from 0.1 to 0.3. Both models are evaluated on the unseen classes, using the chosen reference classes from ImageNet100 as the targets, as shown in the label shift table. Results are averaged over 10 random instantiations of corruptions. We omit error bars in the left figure as standard error is insignificant.}
    \label{fig:ood_noise_levels}
    \vspace{-1.0em}
\end{figure*}

\begin{figure*}%
    \centering
    \subfloat{{\includegraphics[width=0.45\linewidth]{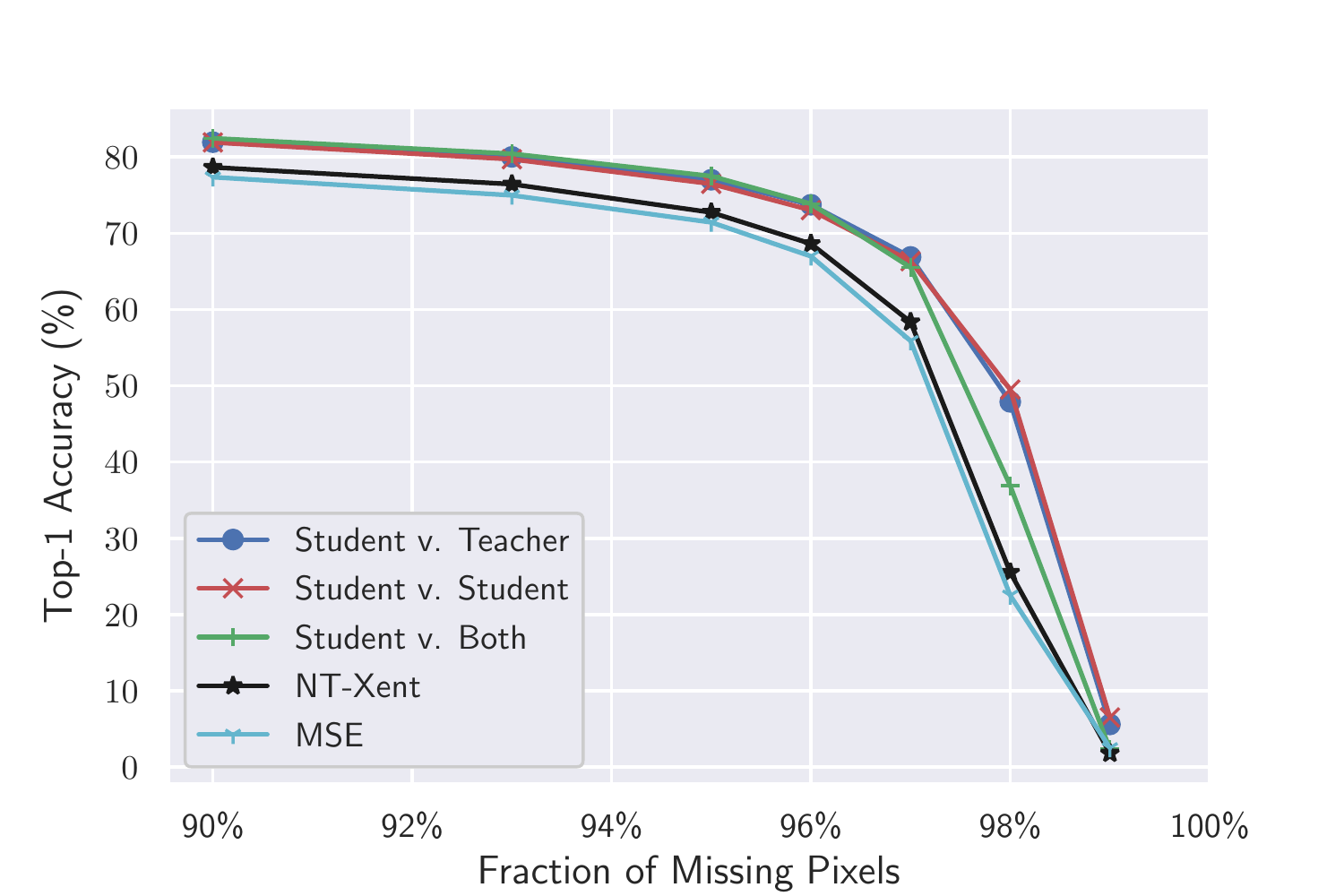} }}%
    \caption{\small \textbf{Ablation study.} We evaluate the model trained on 50\% to 95\% random masking of pixels, using several variants of the uniformity term in the contrastive loss. Results are averaged over 10 random instantiations of corruptions on the ImageNet-100 validation dataset. We omit error bars as standard error is insignificant.}
    \label{fig:ablation}
    \vspace{-1.0em}
\end{figure*}
\subsection{Ablation: Training Loss}
Finally, we ablate against various choices for the uniformity term in the contrastive loss. We consider variants of $\hatL{unif}$ where we compare representations of i) the student and teacher, ii) the student and the student, iii) a sum of both of the above. We also consider $\hatL{MSE}$, i.e. no uniformity term, as well as one variant where every representation is contrasted with every other representation (irrespective of student or teacher), which we note is the same formulation as NT-Xent as used in SimCLR \cite{simclr}.

\paragraph{Setup} We use the same random masking model described in Section \ref{subsec:noise_levels}, with a similar evaluation on fixed masking levels of 90-99\%. Explicit formulations of each of the losses is in the appendix.

\paragraph{Results} We draw two main conclusions from Figure \ref{fig:ablation}. First, the MSE loss performs worst, indicating the benefit of the uniformity term in the loss. \rev{Even though we have access to the pre-trained representations, it is not simple for the model to exploit the encoded information. If this were the case, MSE would be as effective as our contrastive loss.} Second, we see that all student comparisons perform roughly equivalently, but are clearly more effective than MSE and NT-Xent. 

\section{Conclusion}

In this work, we propose a method for training image representation networks which are robust to various distortions on the input data. Our method has potential for improving the practical applications of powerful pre-trained models. Indeed, images in real-world settings are rarely exemplary: every stage of the imaging process, from capture to storage to transmission and display, can introduce noise or distortions in the images. Moreover, our process helps reduce the cost of training such models, both with respect to computation since we can add robustness to a pre-trained model instead of training one from scratch, as well as with respect to label efficiency since our method mainly relies on large amounts of inexpensive unlabeled data.

\rev{Several significant open problems emerge from this work.} As we can see in Appendix \ref{app:imagenetc}, our method requires some prior knowledge of the type of distortions that will occur to the image. This is due to the fact that our method relies on training the student to match the representations of the teacher, which is significantly more difficult when the type of corruption (and thus, the representation itself) changes drastically. As future work, we aim to extend our method beyond this limitation, possibly by finetuning the teacher as well as the student, to produce representations that are more easily matched under different types of distortions. In addition, \rev{another interesting research direction is how well this method performs under adversarially-chosen inputs.}


\section*{Acknowledgements}
This research has been supported by NSF Grants CCF 1763702, 1934932, AF 1901281, 2008710, 2019844 the NSF IFML 2019844 award as well as research gifts by Western Digital, Interdigital, WNCG and MLL, computing resources from TACC and the Archie Straiton Fellowship.

\bibliographystyle{abbrvnat}
\bibliography{neurips_2021.bib}

\newpage
\appendix

\section*{Appendix}

\section{Broader Impact}


As previously mentioned, the goal of our work is to create an image classification system which is robust to various distortions on the input data, which has potential applications on improving the practical use of neural network architectures. However, we have to point out some ethical considerations for the use of our work. On the one hand, allowing more applications to access the power of deep learning may not necessarily have a positive impact, since this largely depends on the goals of the application itself. On the other hand, care should be taken when choosing the dataset on which to train the student encoder, as well as the model to serve as the teacher encoder. Since our proposed process aims to distill information from the teacher, as well as use a specific dataset to perform classification, the presence of biases in either may carry over to our student encoder, which is undesirable. It is paramount for the ethical application of our work to distill information from models without any such biases.




\section{Proof of Proposition 1}
\begin{restatable}{proposition}{propmain}
\label{prop:main}
If $R$ is a minimizer of the uniformity term 
\[ R\in \argmin_f \sum_{i=1}^N\log \sum_{j=1}^N \exp{\frac{\langle f(x_i), f(x_j)\rangle}{\tau}},\] 
then any encoder $S^*\in \argmin_S \hatL{contr}(S; \tau, R, A)$ exactly recovers the target embedding, $S^*(A(x_i)) =R(x_i)$ for all $x_i$ in the training set.
\end{restatable}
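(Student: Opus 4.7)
I would aim to show that any minimizer $S^*$ must satisfy $S^*(A(x_i)) = R(x_i) =: v_i$ for every training index $i$, writing $u_i = S(A(x_i))$ throughout. The assumption stated right before the proposition, that $A(x_i) = A(x_j) \Rightarrow R(x_i) = R(x_j)$ on the training set, ensures that the map $S^\dagger: A(x_i) \mapsto v_i$ is a well-defined function, so it suffices to prove that $S^\dagger$ is the unique minimizer of $\hatL{contr}(\cdot; \tau, R, A)$ on the training set. A direct substitution yields $\hatL{contr}(S^\dagger) = -1/\tau + U(R)/N$, where $U(R) := \sum_{i=1}^{N} \log \sum_{j=1}^N \exp(\langle v_i, v_j\rangle/\tau)$ is the uniformity value attained by $R$.

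The plan is to decompose the loss per-sample as $\hatL{contr}(S) = (1/N)\sum_i \phi_i(u_i)$ with $\phi_i(u) := -\langle u, v_i\rangle/\tau + \log \sum_j \exp(\langle u, v_j\rangle/\tau)$, and show that each $\phi_i$ is minimized on the unit sphere uniquely at $u = v_i$. Since $u \mapsto \log \sum_j \exp(\langle u, v_j\rangle/\tau)$ is convex (log-sum-exp of linear functions), a tangent-line lower bound at $v_i$ combined with algebraic simplification using $\|u\|=\|v_i\|=1$ gives
\[
\phi_i(u) - \phi_i(v_i) \;\geq\; \frac{1}{\tau}\,\bigl\langle u - v_i,\; \bar{v}_{(i)} - v_i \bigr\rangle,
\]
where $\bar{v}_{(i)} := \sum_j v_j \exp(\langle v_i, v_j\rangle/\tau) / \sum_k \exp(\langle v_i, v_k\rangle/\tau)$ is the softmax-weighted centroid of $\{v_j\}$ at $v_i$. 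If $\bar{v}_{(i)} = \alpha_i v_i$ for some scalar $\alpha_i \leq 1$, the right-hand side reduces to $(1-\alpha_i)(1-\langle u, v_i\rangle)/\tau \geq 0$, vanishing only at $u = v_i$; summing over $i$ then delivers both the inequality $\hatL{contr}(S) \geq \hatL{contr}(S^\dagger)$ and the uniqueness of the minimizer.

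The main obstacle, and the step where the uniformity-minimizer hypothesis on $R$ enters essentially, is proving the alignment $\bar{v}_{(i)} \parallel v_i$. Setting the projected gradient of $U(f)$ to zero at $f = R$ on the sphere yields only the weaker relation $\bar{v}_{(i)} + q_i \parallel v_i$, where $q_i := \sum_l v_l \exp(\langle v_l, v_i\rangle/\tau) / \sum_k \exp(\langle v_l, v_k\rangle/\tau)$ accounts for occurrences of $v_i$ as a ``key'' in the other summands of $U$. I would close this gap by invoking the orthogonal invariance of the uniformity objective (so that any minimizer can be rotated into a canonical symmetric form in which $\bar{v}_{(i)}$ is manifestly aligned with $v_i$ by the residual symmetries of the configuration), supplementing with the second-order optimality condition (Hessian positivity on the tangent space of the sphere) if needed. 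An alternative route I would keep in reserve is to sharpen the convexity bound on the log-sum-exp term directly, trading on the enforced spread of $\{v_j\}$, so as to absorb the $q_i$ contribution without requiring a separate alignment argument.
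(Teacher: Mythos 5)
Your reduction to a per-sample problem is exactly the paper's: both decompose $\hatL{contr}$ so that each $u_i = S(A(x_i))$ appears in a single term $\phi_i$, and both then try to show $\phi_i$ is uniquely minimized at $v_i = R(x_i)$. The gap is in the one step where the hypothesis on $R$ must enter, and you have correctly located but not closed it. Your tangent-line bound $\phi_i(u) - \phi_i(v_i) \geq \tfrac{1}{\tau}\langle u - v_i, \bar{v}_{(i)} - v_i\rangle$ is fine, but it only yields the conclusion if the softmax centroid $\bar{v}_{(i)}$ is a nonnegative multiple $\alpha_i v_i$ with $\alpha_i < 1$ (note you need \emph{strict} inequality for uniqueness, not $\alpha_i \leq 1$). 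As you observe, first-order stationarity of the uniformity objective on the sphere does not give this: the gradient with respect to $v_i$ mixes the ``query'' term $\bar{v}_{(i)}$ with the ``key'' term $q_i$, so you only get $\bar{v}_{(i)} + q_i \parallel v_i$. The proposed repairs do not constitute an argument. Orthogonal invariance of the uniformity objective says only that minimizers come in $O(d)$-orbits; it does not force any representative of the orbit to satisfy a per-point alignment identity (rotating the whole configuration rotates every $\bar{v}_{(i)}$ and every $v_i$ simultaneously, so it cannot create alignment that was absent). Alignment does hold for highly symmetric configurations such as the regular simplex, but for general $N$ and ambient dimension the minimizer need not have equal pairwise inner products, and the second-order condition does not supply the missing identity either. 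The ``alternative route in reserve'' is a placeholder, not a proof.

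For contrast, the paper extracts a single, much weaker consequence of the hypothesis --- namely $\sum_j R_j = 0$ for any minimizer of the uniformity term (citing the hypersphere analysis) --- and uses it in a direct perturbation argument: writing $\langle u, v_i\rangle = 1 - \delta$, one bounds $\langle u, v_j\rangle \geq \langle v_i, v_j\rangle - \delta$ for all $j$, and the zero-sum condition forces this to be strict for at least one $j$ (otherwise summing over $j$ gives $0 = -N\delta$). This yields $\sum_j e^{\langle u, v_j\rangle/\tau} > e^{-\delta/\tau}\sum_j e^{\langle v_i, v_j\rangle/\tau}$ and hence $\phi_i(u) - \phi_i(v_i) > \delta/\tau - \delta/\tau = 0$. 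If you want to salvage your convexity route, the workable fix is to replace the alignment claim by this same zero-sum identity: it is the only property of the minimizing $R$ that the argument actually needs, and your per-point condition $\bar{v}_{(i)} \parallel v_i$ is strictly stronger than anything the hypothesis provides.
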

\begin{proof}
We are interested in the minimizers of the function $\hatL{contr}$. To simplify notation, we denote $R(x_i)$ as $R_i$, $S(A(x_i))$ as $S_i$ and 
\[H_R(x) := \sum_j{\ipk{x}{R_j}}.\]
We also denote 
\[F_i(S_i) := -\langle S_i, R_i\rangle + \tau \log H_R(S_i) \]
such that 
\[\hatL{contr}(S; \tau, R, A) = \frac{1}{\tau\cdot N} \sum_{j=1}^N F_i(S_i).\]
Since each $S_i$ only appears in one term of the above sum, it suffices to show that $F_i$ is uniquely minimized by the argument $R_i$. The only assumption we make on $R_i$ follows from \cite{hypersphere} which shows that if $R$ minimizes the uniformity term, then $\sum_j R_j=0$. 

We prove the claim directly by showing that $F_i(S_i)-F_i(R_i)> 0$ for any $S_i$ with $\norm{S_i}=1$ and $S_i\neq R_i$. Fix some $S_i$ with norm 1 and suppose that $\langle S_i, R_i\rangle = 1-\delta$. Then from an argument by cosine distance, we see that replacing $R_i$ with $S_i$ cannot alter the dot product $\langle R_i, R_j\rangle$ by more than $\delta$ for any $j$: $|\langle R_i - S_i, R_j\rangle|\leq \delta$ for all $R_j$. Using optimality of $R$, we claim there exists some $j$ such that this is strict on one side. There must exist a $j$ such that 
\[ \langle R_j, S_i \rangle > \langle R_j, R_i \rangle- \delta, \] 
because if not, then $\langle R_j, S_i \rangle =\langle R_j, R_i\rangle - \delta$ for all $j$. Summing both sides over $j$ and using the fact that $\sum_j R_j=0$ generates the contradiction that $0=-N\delta$. 

Now we decompose the $H_R(S_i)$ term inside $F_i(S_i)$: 
\begin{align*}
    H_R(S_i) &= \sum_j \exp{(\langle S_i ,R_j \rangle/\tau)} \\
           &> e^{-\delta/\tau}\cdot  \sum_j \exp{(\langle R_i ,R_j \rangle/\tau)}\\
           &= e^{-\delta/\tau} \cdot H_R(R_i),
\end{align*}
where the inequality follows from $\exp{(\langle S_i, R_j\rangle /\tau)} \geq \exp{(\langle R_i, R_j\rangle /\tau -\delta/\tau)}$ for all $j$, where it is strict for at least one $j$. 
We can now directly compare $F_i(S_i)$ to $F_i(R_i)$ as 
\begin{align*}
    F_i(x)-F_i(R_i) &= \langle R_i, R_i-S_i\rangle + \tau\cdot (\log H_R(S_i) - \log H_R(R_i))\\
    &= \delta + \tau\cdot (\log H_R(S_i) - \log H_R(R_i)) \\
    &> \delta + \tau\cdot \log e^{-\delta/\tau} =0.
\end{align*}
Hence, $f_i$ has a unique global minimizer at $R_i$. 
\end{proof}

\section{Variants on the Uniformity Term}
As mentioned in the main text, in Section 3, we consider variations on the uniformity term in $\hatL{contr}.$ We explicitly define these variations here. Recall that 
\[\hatL{contr}(S; \tau, R, A) := \frac{1}{\tau}\hatL{MSE}(S; R, A) + \hatL{unif}(S; \tau, R, A).\] 
In the main text, we use the `student vs. teacher' uniformity loss. We explicitly define this variant and all others considered in the following list. We abuse notation slightly and use the function $K_\tau(\cdot, \cdot)$ defined as:
\[K_\tau(y, z) := \exp{(\langle y, z\rangle/\tau)},\] 
noting that in the main text the arguments to $K$ were \textit{indices} and here they the \textit{embedding vectors} themselves. As above, we let $S(A(x_i))$ and $R(x_i)$ be denoted by $S_i, R_i$ respectively.
\begin{itemize}
    \item \textbf{Student vs. Teacher:} This loss compares the noisy student embeddings to the clean teacher embeddings, denoted as 
    \[\hatL{unif}_{ST}(S; \tau, R, A) := \frac{1}{N} \sum_i\log \sum_j K_\tau(S_i, R_j)\]
    \item \textbf{Student vs. Student:} This loss compares pairs of noisy student embeddings:
    \[\hatL{unif}_{ST}(S; \tau, R, A) := \frac{1}{N} \sum_i\log \sum_{j \neq i} K_\tau(S_i, S_j)\]
    \item \textbf{Student vs. Both:} This loss combines the above two losses, where the combination occurs \emph{inside the logarithm}: 
        \[\hatL{unif}_{SB}(S; \tau, R, A) := \frac{1}{N} \sum_i\log \left( \sum_{j \neq i} K_\tau(S_i, S_j) + \sum_{j} K_\tau(S_i, R_j)\right)\]
    \item \textbf{NT-XEnt:} This loss includes terms for the pairwise similarity between all $4N^2-2N$ pairs of student and teacher embeddings. We denote this as $NT-XEnt$, because in the contrastive setting where data is provided in terms of batches of paired positive examples $\{x_i, x_i^+\}$, all pairs are considered in the contrastive loss. 
    \begin{align*}\hatL{unif}_{NT}(S; \tau, R, A) := \frac{1}{N}\sum_i \Bigg(& \log \left( \sum_{j \neq i} K_\tau(S_i, S_j) + \sum_j K_\tau(S_i, R_j)\right)+ \\
    &\log \left( \sum_{j} K_\tau(R_i, S_j) + \sum_{j \neq i} K_\tau(R_i, R_j)\right)\Bigg).
    \end{align*}

\end{itemize}

\section{Further Experiments and Experimental Details}

\subsection{Training Details}

\paragraph{Datasets} For all experiments, we pre-train the robust encoder as well as the baseline using a randomly chosen 100-class subset of the ImageNet dataset \citep{imagenet}. ImageNet consists of 1,000 classes of objects, with 1.2M training images and 50K validation images. The portion we use is the same subset of ImageNet used by \cite{contrastive_multiview}, and contains 126,689 training and 5,000 validation images. We refer to this dataset as ImageNet-100.

For the transfer learning task, we make use of five datasets: (1) CIFAR-10 \citep{cifar10}, (2) CIFAR-100 \citep{cifar10}, and (3) STL-10 \cite{stl10}. (4) The COVID-19 Chest X-ray dataset \citep{covid}, which consists of X-rays taken of patients with healthy lungs as well as lungs affected by pneumonia resulting from COVID-19. The dataset contains 5,286 training images and 624 test images. (5) We generate another random $100$-class subset of ImageNet \citep{imagenet} from the remaining 900 classes we did not use for ImageNet-100, which we refer to as ImageNet-100B. This subset contains 128,987 training images and 5,000 validation images.

\paragraph{Image Pre-Processing} During contrastive training, we randomly crop the image and resize it to a height and width of 224 pixels, then apply a random horizontal flip. The resultant image is normalized to have a mean of $0$ and standard deviation $1$ in each color channel before being fed to the input of the teacher. A copy of the randomly-cropped and flipped image is distorted with a given forward operator and normalized to feed to the student input. We pre-process the images for training the baseline by applying a random crop, resizing to $224 \times 224$ pixels, applying a given distortion, and normalizing the pixel values. During validation and testing, we resize each image to a height and width of $256$ pixels, take a center crop of $224 \times 224$ pixels, apply a distortion, and finally normalize the image. We vary the train and test distortions depending on the setting we evaluate.

For images from CIFAR-10, CIFAR-100, and STL-10, we pre-process the training images by resizing to $224 \times 224$ pixels, applying a random horizontal flip, distorting with a given forward operator, and finally normalizing the images. For validation images from these datasets we perform the same process without the random flip. For training images from the COVID-19 X-ray dataset and ImageNet-100B, we apply a random crop, resize to $224 \times 224$ pixels, apply a given distortion, and normalize the pixel values. For validation images from these datasets, instead of a random crop and resize, we resize to $256 \times 256$ pixels and take a $224 \times 224$ center crop of the image, then apply a distortion and normalize.

We apply random pixel masking, Gaussian blur, and additive Gaussian noise as our distortions in the various settings we evaluate. Random pixel masking sets the intensity of a randomly-chosen set of pixels in an image to $0$. Gaussian blur convolves the image with an isotropic Gaussian kernel, and additive Gaussian noise applies additive white gaussian noise to each each pixel in the image.

\paragraph{Training Hyperparameters} We train our method and the supervised baseline using the Adam optimizer with default values for $\beta$ and a cosine learning rate schedule \cite{adam}. For the baseline, we use a learning rate of $0.001$ and a batch size of 64. For our method, we use a learning rate of $0.0003$, weight decay of $0.0001$ and a batch size of $256$, and we set the temperature $\tau$ from Eq~(3) to be $0.1$. We train both the baseline and our method for 25 epochs. The baseline is optimized using cross entropy loss. 

We train a linear classifier on top of the learned representations for our method and the baseline for each set of experiments. The linear classification layer is optimized with the Adam optimizer, using a learning rate of $0.001$ and a batch size of $128$, and is trained for 10 epochs. For label-efficiency experiments, we lower the batch size to $8$ to compensate for the decrease in the data used. We freeze the learned backbone network during training of the classifier. The classifier is trained using images that have the same distortions as were used to train the backbone network. During transfer learning using the baseline, we remove the classification layer used on ImageNet-100 pre-training and replace it with a randomly-initialized layer with appropriate dimension.     

We choose the hyperparameters for each model using a linear search over several values for each hyperparameter, based on the highest mean validation accuracy we achieve on ImageNet-100 after one epoch of training on each distortion type. \rev{Due to computational limitations, we did not use a separate validation set for hyperparameter tuning, but rather extrapolated from these results over a single epoch of training. The values searched over are as follows:
\begin{itemize}
    \item The learning rate for our method was searched in the range of $[10^{-4}, 10^{-2}]$. For the baseline, the search was over $[10^{-5}, 5\cdot 10^{-1}]$.
    \item The weight decay was searched in the range $[10^{-4}, 10^{-3}]$.
    \item The temperature parameter $\tau$ was searched in the range $[0.1, 1]$.
\end{itemize}
}
The batch size was set as high as possible with our computing hardware. For our method, this was done since this form of contrastive loss benefits from greater batch size. For the baseline, this choice improved performance. All experiments were performed on a system with 4 Nvidia Quadro RTX5000 GPUs, 2 Intel Xeon E5-2620 v4 CPUs, and 128GB of RAM. Experiments using CLIP networks were run using 16-bit floating point models and data. We use the PyTorch implementation of the supervised ImageNet-trained ResNet-101 \citep{pytorch}. 

\subsection{Top-5 Results for Main Experiments}

For sake of completeness, we provide the full metrics for our experiments, which also include top-5 accuracies for the experiments performed. The conclusions we can draw from the top-1 accuracy results do not change when we look at the top-5 accuracies. 

For the transfer learning task on COVID X-ray data, we present the area under the receiver operating characteristic curve (AUC). This dataset presents a binary classification problem, so top-5 metrics do not apply.  

\begin{figure*}[ht]%
    \centering
    \subfloat{{\includegraphics[width=0.45\linewidth]{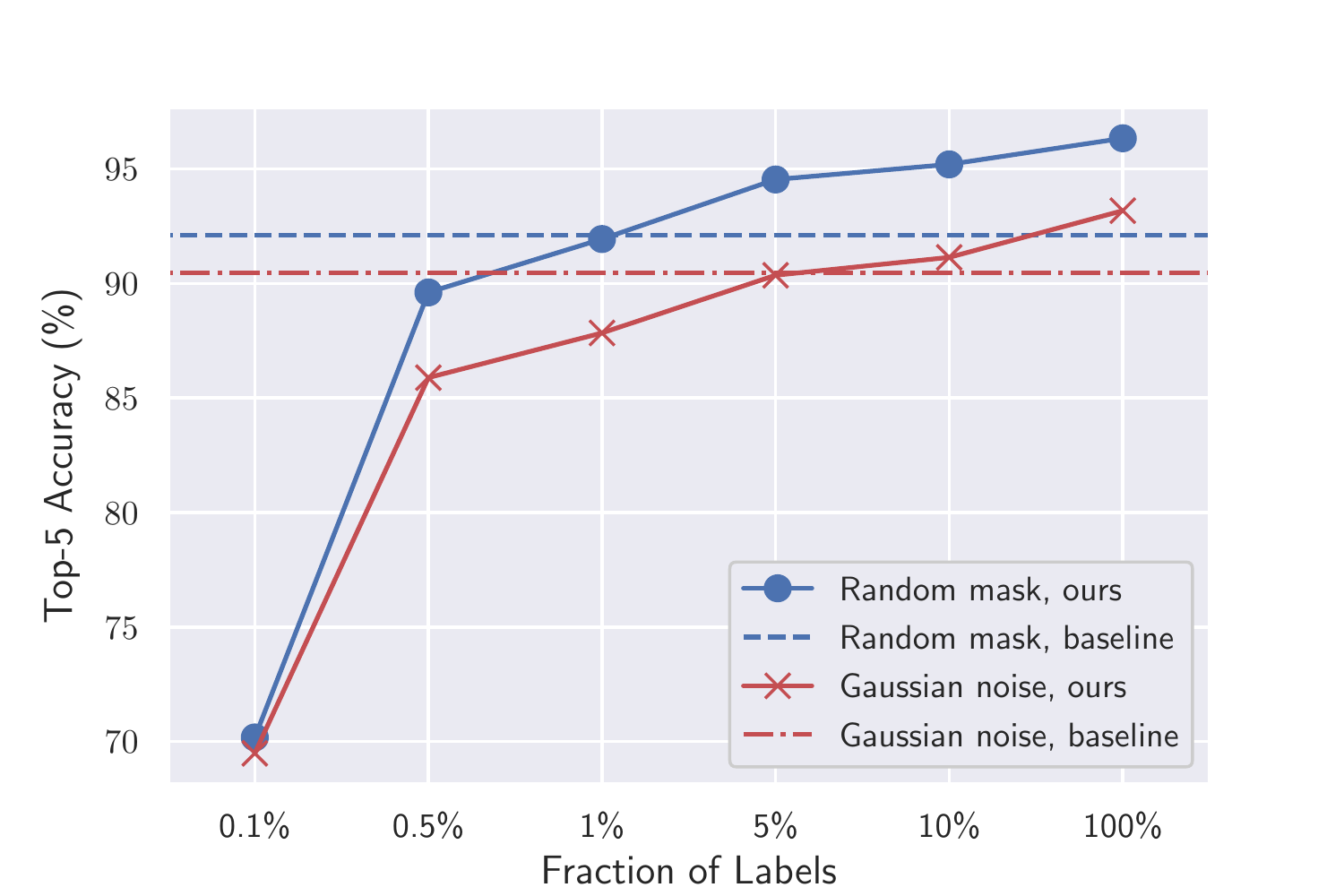} }}%
    \caption{\small \textbf{Top-5 accuracies using a varying fraction of labeled samples to train a linear probe.} We train robust encoders on images with 90\% random pixel masking and additive Gaussian noise with standard deviation 0.5, and fit a linear classifier on the learned representations using varying fractions of labeled training samples. We compare to a supervised baseline that uses all of the labeled training samples. Results are averaged over 10 random instantiations of corruptions on the ImageNet-100 validation dataset. We omit error bars as standard error is insignificant.}
    \label{fig:varying_num_labels_5}
    \vspace{-2em}
\end{figure*}

\begin{figure*}[ht]
    \centering
    \subfloat{{\includegraphics[width=0.45\linewidth]{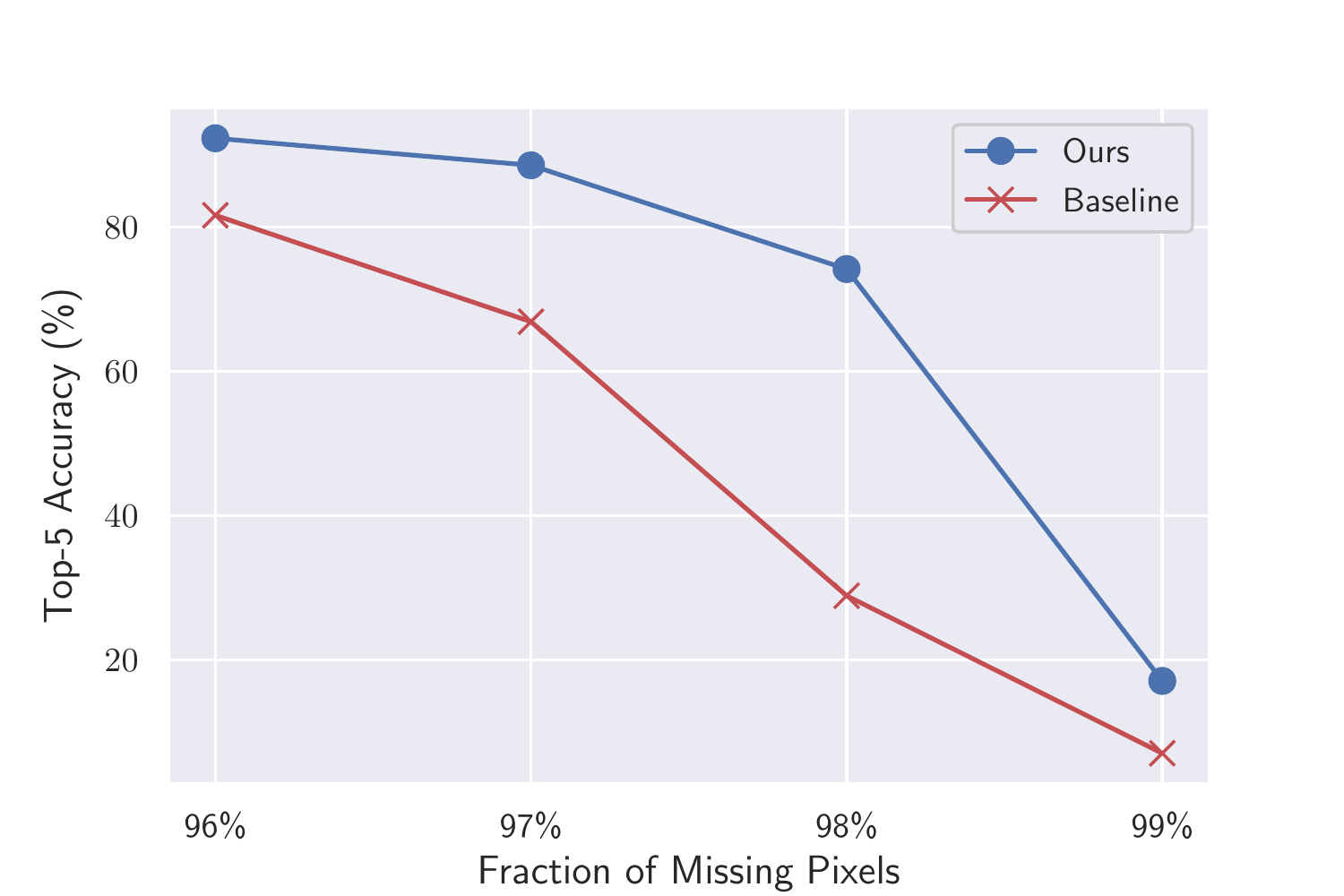} }}
    \subfloat{{\includegraphics[width=0.45\linewidth]{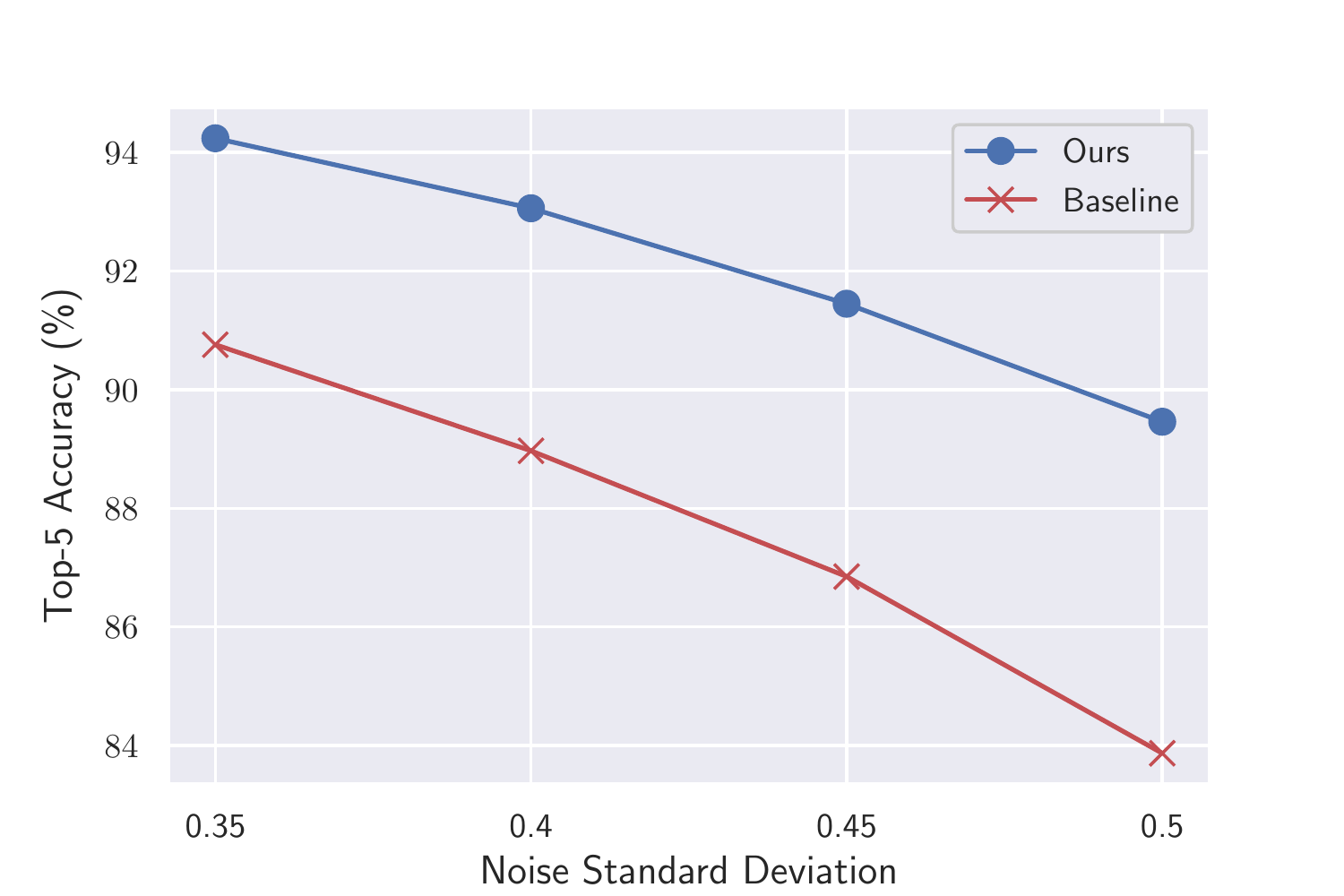} }}
    \caption{\small \textbf{Top-5 accuracies for images with varying corruption levels using models trained on a range of levels.} In the left figure, we compare our robust model with a baseline, both trained on images with 50\% to 95\% random pixel masking. In the right figure, each model is trained on images with additive Gaussian noise with random standard deviation from 0.1 to 0.3. We evaluate the models on images with more severe corruptions than applied during training. Results are averaged over 10 random instantiations of corruptions on the ImageNet-100 validation dataset. We omit error bars as standard error is insignificant.}
    \vspace{-1.0em}
    \label{fig:noise_levels_5}
\end{figure*}

\begin{figure*}[ht]
    \centering
    \subfloat{{\includegraphics[width=0.45\linewidth]{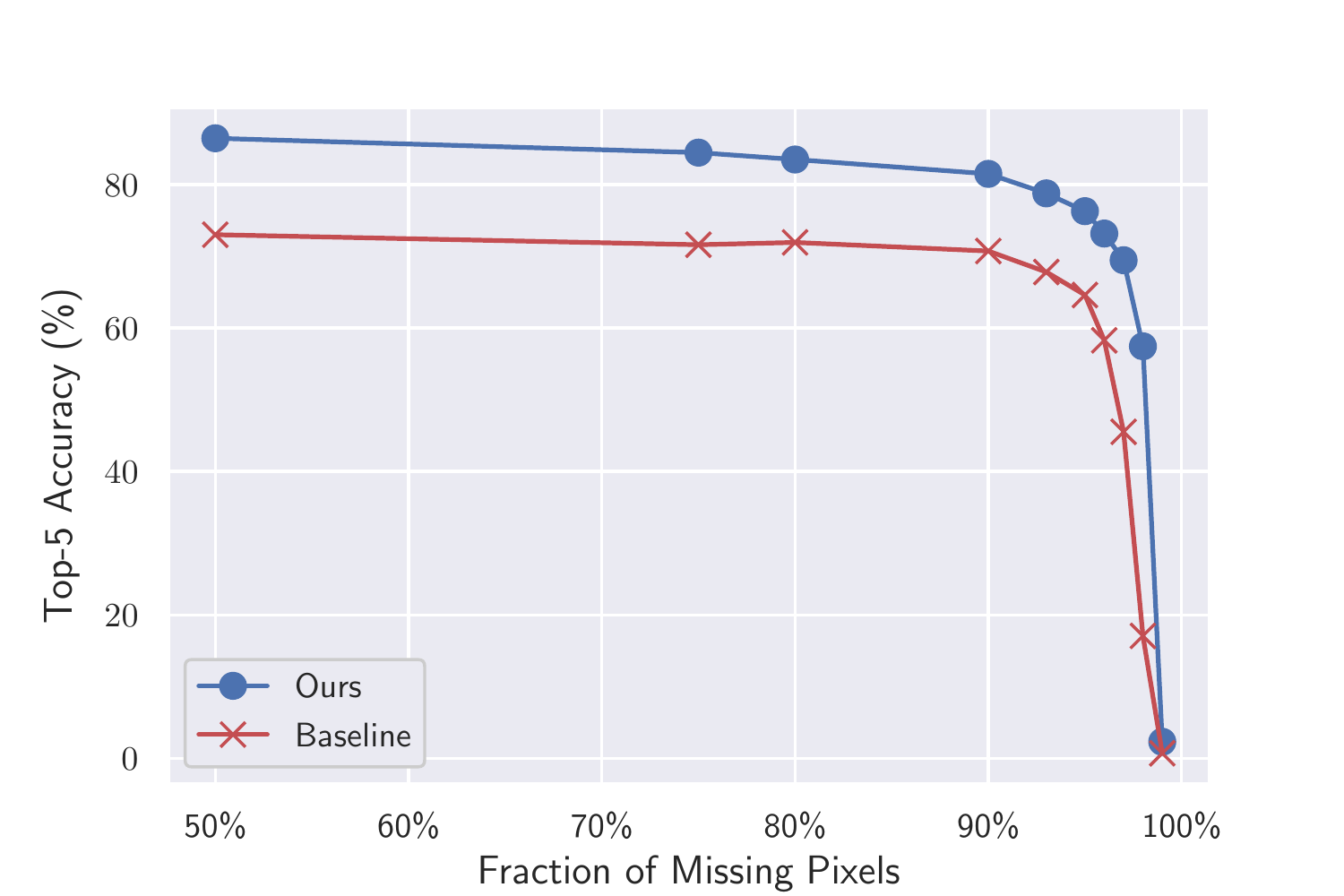} }}
    \subfloat{{\includegraphics[width=0.45\linewidth]{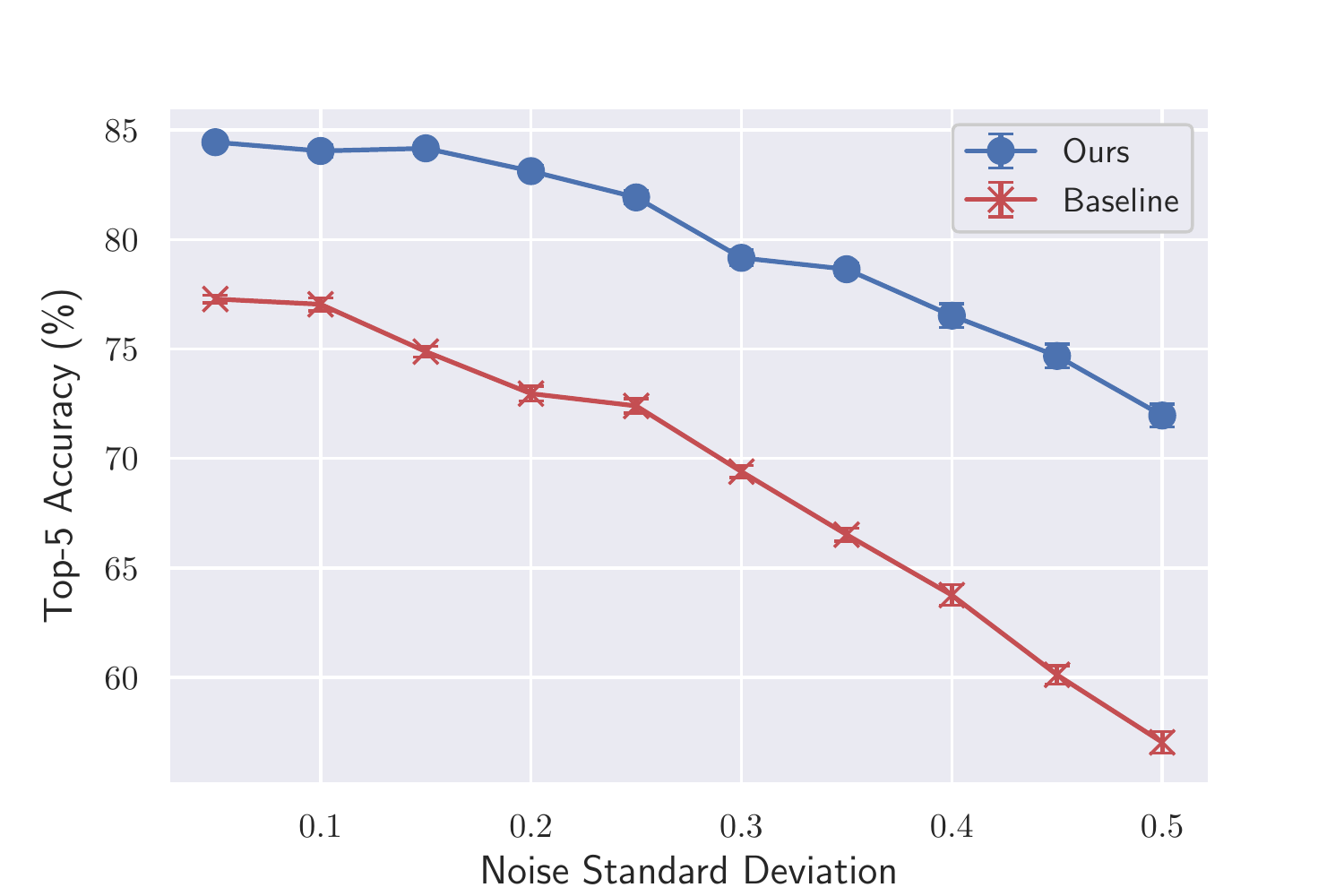} }}
    \caption{\small \textbf{Top-5 accuracies for varying noise levels on unseen classes using label shift.} On the left, we see the model trained on 50\% to 95\% random masking of pixels, while on the right the model trained on Gaussian noise with standard deviation from 0.1 to 0.3. Both models are evaluated on the unseen classes, using the chosen reference classes from ImageNet100 as the targets, as shown in the label shift table. Results are averaged over 10 random instantiations of corruptions. We omit error bars in the left figure as standard error is insignificant.}
    \label{fig:ood_noise_levels_5}
    \vspace{-1.0em}
\end{figure*}

\begin{figure*}[ht]
    \centering
    \subfloat{{\includegraphics[width=0.45\linewidth]{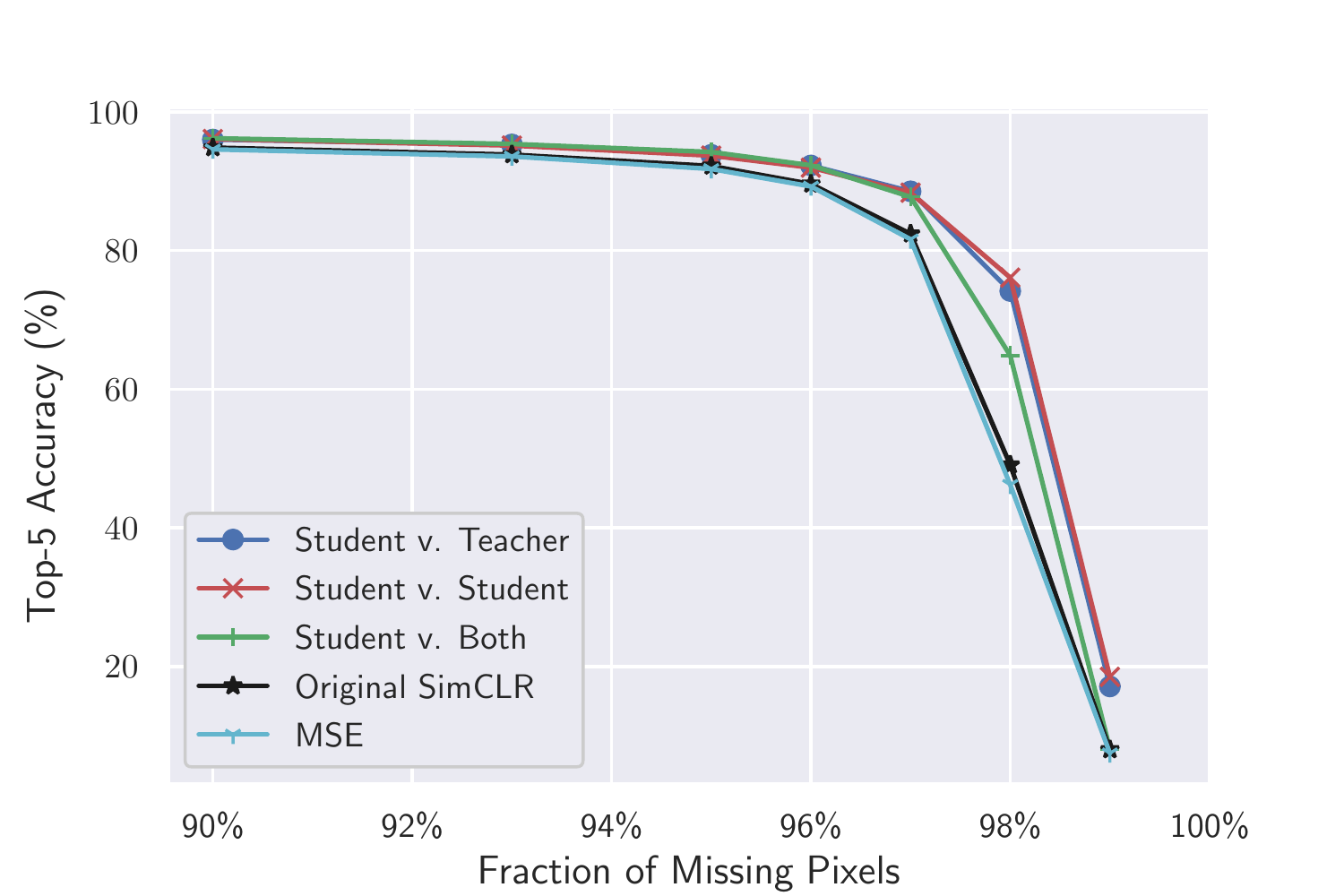} }}
    \caption{\small \textbf{Ablation study with top-5 accuracies.} We evaluate the model trained on 50\% to 95\% random masking of pixels, using several variants of the uniformity term in the contrastive loss. Results are averaged over 10 random instantiations of corruptions on the ImageNet-100 validation dataset. We omit error bars as standard error is insignificant.}
    \label{fig:ablation_5}
    \vspace{-1.0em}
\end{figure*}

\begin{table}[ht]
\centering
\caption{\small \textbf{Top-5 accuracy (percent) on ImageNet-100}. The best accuracy for each distortion is bolded. Each model is trained using images with a fixed type of distortion. We train our robust CLIP encoder contrastively, then fit a linear probe on the learned representations using either all or 10\% of the labeled training samples. We report the mean and standard error for accuracy over 10 random instantiations of distortions on the ImageNet-100 validation dataset (Gaussian blur is deterministic, so we do not include standard error values). For Gaussian blur, n corresponds to the length of the blur kernel.}
\label{tab:base_results_5}

\begin{tabular}{@{}lccc@{}}
\toprule
Distortion                    & \begin{tabular}[c]{@{}c@{}}Supervised\\ Baseline\end{tabular} & Ours                  & \begin{tabular}[c]{@{}c@{}}Ours\\ (10\% labeled data)\end{tabular} \\ \midrule
Random Mask 50\%              & 93.86$\pm$0.04                                                  & \textbf{97.60$\pm$0.02} & 96.57$\pm$0.06                                                       \\
Random Mask 75\%              & 93.19$\pm$0.05                                                  & \textbf{96.83$\pm$0.06} & 95.53$\pm$0.05                                                       \\
Random Mask 90\%              & 92.12$\pm$0.05                                                  & \textbf{96.34$\pm$0.04} & 95.20$\pm$0.04                                                       \\
Gaussian Noise $\sigma$ = 0.1 & 95.49$\pm$0.03                                                  & \textbf{97.39$\pm$0.04} & 96.11$\pm$0.03                                                       \\
Gaussian Noise $\sigma$ = 0.3 & 93.00$\pm$0.05                                                  & \textbf{95.59$\pm$0.07}  & 93.95$\pm$0.06                                                       \\
Gaussian Noise $\sigma$ = 0.5 & 90.45$\pm$0.05                                                  & \textbf{93.18$\pm$0.06}  & 91.14$\pm$0.06                                                       \\
Gaussian Blur n = 21          & 93.38                                                          & \textbf{96.34}        & 95.26                                                              \\
Gaussian Blur n = 37          & 89.42                                                         & \textbf{93.94}         & 91.88                                                              \\ \bottomrule
\end{tabular}
\end{table}

\begin{table}[ht]
\caption{\small \textbf{Top-5 accuracies for transfer learning.} We fit a linear classifier for each dataset on top of the representations learned by the models from ImageNet-100. RM means the model was trained with random missing pixels, and GN means it was trained with additive Gaussian noise. Results are mean and standard errors over 10 realizations of the distortions during evaluation. The COVID X-ray dataset has two classes so there are no top-5 results.}
\label{tab:transfer_5}
\centering
\begin{tabular}{@{}lcccc@{}}
\toprule
Model         & CIFAR-10              & CIFAR-100             & STL-10             & ImageNet-100B         \\ \midrule
Baseline (RM) & 98.92$\pm$0.02          & 84.09$\pm$0.05           & 99.38$\pm$0.01          & 88.35$\pm$0.04         \\
Ours (RM)     & \textbf{99.23$\pm$0.02} & \textbf{86.26$\pm$0.04} & \textbf{99.72$\pm$0.01} & \textbf{95.40$\pm$0.04} \\ \midrule
Baseline (GN) & \textbf{98.76$\pm$0.02} & 81.37$\pm$0.05             & 99.41$\pm$0.01         & 89.81$\pm$0.08          \\
Ours (GN)     & 98.72$\pm$0.02          & \textbf{81.50$\pm$0.06} & \textbf{99.48$\pm$0.01}  & \textbf{94.74$\pm$0.06} \\ \bottomrule
\end{tabular}
\end{table}

\begin{table}[ht]
\caption{\small \textbf{Area under receiver operating characteristic curve (AUC) for transfer learning on COVID X-ray data.} We fit a linear classifier for the COVID X-ray dataset on top of the representations learned by the models from ImageNet-100. RM means the model was trained with random missing pixels, and GN means it was trained with additive Gaussian noise. Results are mean and standard errors over 10 realizations of the distortions during evaluation.}
\label{tab:covid_auc}
\centering
\begin{tabular}{@{}lc@{}}
\toprule
Model         & AUC \\ \midrule
Baseline (RM) & 0.918$\pm$0.0011 \\
Ours (RM)     & 0.918$\pm$0.0010 \\ \midrule
Baseline (GN) & \textbf{0.927$\pm$0.0009} \\
Ours (GN)     & 0.896$\pm$0.0015 \\ \bottomrule
\end{tabular}
\end{table}

\FloatBarrier

\subsection{Comparisons with Denoising and Inpainting for Classification}

An alternative way of solving the inverse problem presented in the paper is to use methods which operate directly on the pixel space of the image. Instead of training a classifier which operates on distorted images, one could try to recover the original images from the distorted version. The recovered images can then be given to a classifier trained on clean images. We compare our method with two such baselines which operate in pixel space:
\begin{enumerate}
    \item We apply Non-Local Means (NLM) \cite{nlmeans} denoising to images corrupted by additive Gaussian noise.
    
    \item We use Deep Decoder \cite{heckel2018deep} with default parameters and 5000 optimization steps to perform inpainting on images with random missing pixels. Deep Decoder is a method which randomly initializes an under-parameterized generative network with upsampling and $1 \times 1$ convolution layers, then optimizes over the weights of the network to fit a single distorted image. Since the network is underparameterized, it cannot fit noise very well, while its upsampling and convolution layers bias it to produce natural-looking images. The result is that the network produces a reconstructed version of the original image, despite never having been trained on any other data.    
\end{enumerate}
For evaluation, we corrupt images from the ImageNet-100 validation set with Gaussian noise and random pixel masks, then apply NLM and Deep Decoder, respectively. We feed the recovered images as input to a classification model trained on clean image data. The classification model we use is an ImageNet pre-trained ResNet-101 backbone with a linear classifier trained on clean ImageNet-100 images. We compare the performance of these pixel-space inverse methods with that of our method. The results for denoising are seen in Table~\ref{tab:denoising} and the results for inpainting in Table~\ref{tab:inpainting}. We can see that the inverse methods acting in pixel space are not reliable at reconstructing images for classification. In the case of denoising the accuracy degrades quickly with increasing noise, and for inpainting the accuracy is poor for all masking levels.            

\begin{table}[ht]
\centering
\caption{\small \textbf{Denoising baseline.} We compare our method to a 2-step denoising baseline, where a) we denoise images with various levels of additive Gaussian noise using Non-Local Means denoising and b) we feed the images through a model trained on clean images from ImageNet-100. Results for our method are included for comparison. We perform 10 evaluation runs for each experiment and present the mean and standard error.}
\label{tab:denoising}
\begin{tabular}{@{}lcc@{}}
\toprule
Noise Level                              & Denoising    & Ours         \\ \midrule
$\sigma = 0.1$ & 73.49$\pm$0.08 & \textbf{84.46$\pm$0.08} \\
$\sigma = 0.3$ & 56.47$\pm$0.14 & \textbf{81.30$\pm$0.07} \\
$\sigma = 0.5$ & 27.64$\pm$0.14 & \textbf{76.23$\pm$0.10} \\ \bottomrule
\end{tabular}
\end{table}

\begin{table}[ht]
\centering
\caption{\small \textbf{Inpainting baseline.} We compare our method to an inpainting baseline, where a) we inpaint images with varying fractions of missing pixels using Deep Decoder \citep{heckel2018deep} and b) we feed the images through a model trained on clean images from ImageNet-100. Results from the main paper are included for comparison.
}
\label{tab:inpainting}
\begin{tabular}{@{}lcc@{}}
\toprule
Missing Pixel Fraction                              & Inpainting    & Ours         \\ \midrule
50\% & 23.38 & \textbf{85.87} \\
75\% & 21.89 & \textbf{83.99} \\
90\% & 20.39 & \textbf{82.96} \\ \bottomrule
\end{tabular}
\end{table}

\FloatBarrier

\subsection{Transfering a Clean Classifier to Robust Encoders}

To demonstrate the ability of our method to retrieve good representations from the teacher, we perform the following experiment. We train a robust encoder on distorted images using the method we propose. We then train a linear classifier on top of the pre-trained, non-robust CLIP backbone using clean images. Finally, we transfer this linear classifier for clean images to the robust encoder. The results can be seen in Table \ref{tab:transfer_clear}. We see that our technique achieves good results, even without finetuning the linear classifier on distorted images. This means that the representations learned by the student for distorted images are sufficiently close to those of the teacher for clean images. 

\begin{table}[ht]
\centering
\caption{\small \textbf{Accuracies for applying a linear classifier trained on clean images on top of representations from a robust CLIP encoder.} We transfer the linear classifier trained on clean images on top of the representations learned by our contrastive technique. We evaluate on the same noises used during training of the robust encoder. Best scores are not bolded since we intend to illustrate a quality of our method instead of comparing two techniques.}
\label{tab:transfer_clear}
\begin{tabular}{lcc}
\toprule
Distortion                             & Clean Linear Classifier & Ours \\ \midrule
Random Mask 50\%              & 80.1 & \textbf{85.87}                    \\
Random Mask 75\%              & 76.4 & \textbf{83.99}                  \\
Random Mask 90\%              & 78.1 & \textbf{82.96}                 \\
Gaussian Noise $\sigma = 0.1$ & 77.2 & \textbf{84.46}                  \\
Gaussian Noise $\sigma = 0.3$ & 75.7 & \textbf{81.30}                  \\
Gaussian Noise $\sigma = 0.5$ & 70.5 & \textbf{76.23}                  \\
Gaussian Blur n = 21          & 78.3 & \textbf{83.24}                  \\
Gaussian Blur n = 37          & 73.1 & \textbf{77.80}                  \\ \bottomrule
\end{tabular}
\end{table}

\FloatBarrier

\subsection{\rev{Decreasing Noise Levels}}

Extending the results from Section \ref{subsec:noise_levels}, we evaluate both the baseline and our robust encoder in the setting where the noise levels seen during testing are lower than those seen during training. The results can be seen in Table \ref{tab:decrease_noise}, and they are in accord with the rest of our observations: accuracy for both models is higher (due to lower noise), and our method still outperforms the baseline.

\begin{table}[ht]
\centering
\caption{\textbf{Evaluation of our method on lower noise levels during testing.} We extend the results from Section \ref{subsec:noise_levels} to incorporate the case where the noise is lower during test time than that seen during training time. We see that our method still outperforms the baseline.}
\label{tab:decrease_noise}
\begin{tabular}{@{}lcc@{}}
\toprule
Distortion & Baseline & Ours \\ \midrule
Random Mask 30\%    & 76.75 ± 0.05      & \textbf{86.06 ± 0.06}  \\
Random Mask 35\%    & 76.89 ± 0.07      & \textbf{86.04 ± 0.05}  \\
Random Mask 40\%    & 76.93 ± 0.04      & \textbf{86.11 ± 0.06}  \\
Random Mask 45\%    & 77.05 ± 0.08      & \textbf{86.06 ± 0.05}  \\
Gaussian Noise $\sigma = 0.02$                & 77.92 ± 0.02      & \textbf{85.71 ± 0.02}  \\
Gaussian Noise $\sigma = 0.04$                & 80.04 ± 0.05      & \textbf{86.13 ± 0.04}  \\
Gaussian Noise $\sigma = 0.06$                & 80.95 ± 0.05      & \textbf{86.40 ± 0.04}  \\
Gaussian Noise $\sigma = 0.08$                & 80.98 ± 0.04      & \textbf{86.25 ± 0.07}  \\ \bottomrule
\end{tabular}
\end{table}

\FloatBarrier

\subsection{\rev{Baseline Model Pretrained on ImageNet-100}}

We examine a variant of our baseline model, where instead of using a ResNet pretrained on the full ImageNet dataset, we instead train our ResNet on ImageNet-100 to get a good classifier on the clean images, and then use that as a starting point for our baseline model. More specifically, the chosen architecture is again ResNet-101, trained in a supervised fashion for clean images for 90 epochs, with an SGD optimizer, a learning rate of 0.1, momentum of 0.9 and batch size of 256. \footnote{These are the default hyperparameters for full ImageNet training as in the official pytorch examples repository, found here: \url{https://github.com/pytorch/examples/tree/master/imagenet}.} These results can be seen in Tables \ref{tab:pretrainIm100_1} and \ref{tab:pretrainIm100_2}. We can see that these results are comparable to (and in most cases worse than) our original baseline. This is to be expected, since the model which is initialized with full ImageNet weights was trained on roughly 10 times more data than this new model. In any case, results are still worse than our method, which means that the latter is capable of outperforming the stronger of the two baselines. 

\begin{table}[ht]
\centering
\caption{\textbf{Comparison of initial weights for supervised baseline, for the fixed noise experiment.} Training of the supervised baseline can be done starting from a ResNet-101 which was trained on the full ImageNet, or from one which was trained on only ImageNet-100. We can see that the first choice, which is the one used in the rest of this paper, is the better of the two. In any case, both baselines provide worse results than our method (compare to Table \ref{tab:base_results}).}
\label{tab:pretrainIm100_1}
\begin{tabular}{@{}lcc@{}}
\toprule
Distortion   & \begin{tabular}[c]{@{}c@{}}Baseline Initial Weights\\ From ImageNet-100\end{tabular} & \begin{tabular}[c]{@{}c@{}}Baseline Initial Weights\\ From Full ImageNet\end{tabular} \\ \midrule
Random Mask 50\%       & 76.48 ± 0.02                                                                                  & \textbf{77.53 ± 0.06}                                                                                   \\
Random Mask 75\%       & 74.38 ± 0.07                                                                                  & \textbf{75.68 ± 0.06}                                                                                   \\
Random Mask 90\%       & 70.77 ± 0.10                                                                                  & \textbf{74.12 ± 0.09}                                                                                   \\
Gaussian Noise $\sigma = 0.1$ & 78.95 ± 0.06                                                                                  & \textbf{82.23 ± 0.04}                                                                                   \\
Gaussian Noise $\sigma = 0.3$ & 74.03 ± 0.08                                                                                  & \textbf{75.78 ± 0.08}                                                                                   \\
Gaussian Noise $\sigma = 0.5$ & 69.34 ± 0.09                                                                                  & \textbf{71.43 ± 0.14}                                                                                   \\
Gaussian Blur n = 21   & 72.70                                                                                         & \textbf{76.40}                                                                                          \\
Gaussian Blur n = 37   & 67.26                                                                                         & \textbf{68.94}                                                                                          \\ \bottomrule
\end{tabular}
\end{table}

\begin{table}[ht]
\centering
\caption{\textbf{Comparison of initial weights for supervised baseline, for the varying noise level experiment.} Similar to Table \ref{tab:pretrainIm100_1}, we compare the two choices for the baseline for the experiment with varying noise levels. Again, training with the full ImageNet dataset provides a stronger baseline in most cases (compare with Figure \ref{fig:noise_levels}).}
\label{tab:pretrainIm100_2}
\begin{tabular}{@{}lcc@{}}
\toprule
Distortion & \begin{tabular}[c]{@{}c@{}}Baseline Initial Weights\\ From ImageNet-100\end{tabular} & \begin{tabular}[c]{@{}c@{}}Baseline Initial Weights\\ From Full ImageNet\end{tabular} \\ \midrule
Random Mask 96\%    & \textbf{57.21 ± 0.11}                                                                                  & 56.80 ± 0.01                                                                                   \\
Random Mask 97\%    & \textbf{41.58 ± 0.07}                                                                                  & 39.28 ± 0.15                                                                                   \\
Random Mask 98\%    & 10.84 ± 0.09                                                                                  & \textbf{11.01 ± 0.12}                                                                                   \\
Random Mask 99\%    & 1.12 ± 0.02                                                                                   & \textbf{2.03 ± 0.12}                                                                                    \\
Gaussian Noise $\sigma = 0.35$               & 71.71 ± 0.17                                                                                  & \textbf{72.34 ± 0.12}                                                                                   \\
Gaussian Noise $\sigma = 0.4$                 & 68.40 ± 0.09                                                                                  & \textbf{69.10 ± 0.12}                                                                                   \\
Gaussian Noise $\sigma = 0.45$                & 62.75 ± 0.13                                                                                  & \textbf{65.62 ± 0.12}                                                                                   \\
Gaussian Noise $\sigma = 0.5$                 & 54.16 ± 0.10                                                                                  & \textbf{61.13 ± 0.19}  \\ \bottomrule                                                                                
\end{tabular}
\end{table}

\FloatBarrier
\subsection{\rev{Using original representations on distorted images}}\label{app:original}

To understand how much we can gain from performing contrastive training to make image representations more robust, we would like to see how well the original CLIP network performs for classifying distorted images. We train a linear classifier on top of the pre-trained, non-robust CLIP backbone using distorted images (i.e. the network has not been trained with a contrastive step). We also train a linear classifier on top of the ImageNet pre-trained ResNet-101 backbone using distorted images. We present the results in Table~\ref{tab:original}. Clearly, both CLIP and ImageNet pre-trained ResNet-101 produce poor representations for distorted images resulting in low classification accuracy. These results highlight the need for a training procedure to make these pre-trained representations more robust. 

\begin{table}[ht]
\centering
\caption{\textbf{Learning classifier on top of original representations.} Here, the linear classifier is trained on distorted images, using the original representations from CLIP and from ResNet-101 pre-trained on ImageNet. We can see that this deteriorates the results in both cases, when compared to Table \ref{tab:base_results}. We do not bold best results as this experiment does not attempt to compare these two methods.}
\label{tab:original}
\begin{tabular}{@{}lcc@{}}
\toprule
Distortion             & \begin{tabular}[c]{@{}c@{}}Original CLIP\\ Representations\end{tabular} & \begin{tabular}[c]{@{}c@{}}Clean ResNet-101\\ Representations\end{tabular} \\ \midrule
Random Mask 50\%       & 41.30 ± 0.06                                                            & 46.30 ± 0.04                                                              \\
Random Mask 75\%       & 24.00 ± 0.07                                                            & 30.40 ± 0.04                                                              \\
Random Mask 90\%       & 14.50 ± 0.10                                                            & 23.10 ± 0.07                                                              \\
Gaussian Noise $\sigma = 0.1$ & 75.20 ± 0.06                                                            & 74.50 ± 0.05                                                              \\
Gaussian Noise $\sigma = 0.3$ & 25.10 ± 0.07                                                            & 50.80 ± 0.06                                                              \\
Gaussian Noise $\sigma = 0.5$ & 7.70 ± 0.09                                                             & 25.20 ± 0.09                                                              \\
Gaussian Blur n = 21   & 51.20                                                                   & 65.80                                                                     \\
Gaussian Blur n = 37   & 22.30                                                                   & 45.40                                                                     \\ \bottomrule
\end{tabular}
\end{table}

\FloatBarrier
\subsection{Experiments on ImageNet-100C}\label{app:imagenetc}

As a final benchmark, we compare our methods on a subset of ImageNet-C \cite{robust_benchmark} with the same classes as those of ImageNet-100, henceforth referred to as ImageNet-100C. We compare two models:
\begin{itemize}
    \item The first is a baseline ResNet-101, pretrained on ImageNet and finetuned on clean images of ImageNet-100.
    
    \item The second is a version of our student encoder, which is initialized from CLIP and is trained on distorted images. On these images, Gaussian blurring with $\sigma \in [1,5]$ and Gaussian additive noise with $\sigma \in [0.05,0.5]$ are applied, independently with probability 0.8 each.
\end{itemize}
Results can be seen in Table~\ref{tab:imagenet-c}. The values for the methods on each corruption type are top-1 accuracies averaged across 5 levels of corruption. We can see here that our model does not get results comparable to the baseline. This can be explained by a limitation of our current work, in that we rely on student representations matching the teacher representations. If the \textit{type} of noise is altered for the student, then it is difficult for it to match the teacher representations, which are fixed. Indeed, altering the type of noise on an image is expected to greatly affect its representation. Thus, at its present iteration, our technique relies on some prior knowledge about the \textit{type} of distortion encountered. Possible ways around this, such as also finetuning the teacher, are left for consideration in future work.

\begin{table}[ht]
\centering
\caption{\small \textbf{ImageNet-100C results.} We present the mean top-1 accuracies across 5 corruption levels for each corruption type. We do not present mean corruption error (mCE) as this is computed with respect to performance on full ImageNet-C, while we compute on a 100-class subset of ImageNet-C. }
\label{tab:imagenet-c}
\begin{tabular}{@{}llcc@{}}
\toprule
                 &   Corruption            & Supervised Baseline & Ours  \\ \midrule
\textbf{Blur}    & Defocus       & \textbf{61.76}               & 56.88 \\
\textbf{}        & Glass         & 49.15               & \textbf{50.52} \\
\textbf{}        & Motion        & \textbf{59.38}               & 44.37 \\
\textbf{}        & Zoom          & \textbf{59.00}               & 50.69 \\
\midrule
\textbf{Digital} & Contrast      & \textbf{58.61}               & 43.69 \\
\textbf{}        & Elastic       & \textbf{67.85}               & 60.40 \\
\textbf{}        & JPEG          & \textbf{77.75}               & 61.08 \\
\textbf{}        & Pixelate      & 73.06               & \textbf{75.65} \\ \midrule
\textbf{Noise}   & Gaussian      & 55.83               & \textbf{56.81} \\
\textbf{}        & Impulse       & 51.05               & \textbf{54.11} \\
\textbf{}        & Shot          & 53.76               & \textbf{56.74} \\ \midrule
\textbf{Weather} & Brightness    & \textbf{86.01}               & 72.57 \\
\textbf{}        & Fog           & \textbf{67.56}               & 54.20 \\
\textbf{}        & Frost         & \textbf{59.58}               & 51.70 \\
\textbf{}        & Snow          & \textbf{53.97}               & 39.80 \\ \midrule
\textbf{Extra}   & Gaussian Blur & \textbf{64.31}               & 62.66 \\
\textbf{}        & Saturate      & \textbf{82.16}               & 65.95 \\
\textbf{}        & Spatter       & \textbf{71.9}                & 59.52 \\
\textbf{}        & Speckle       & 62.65               & \textbf{66.28} \\ \bottomrule
\end{tabular}
\end{table}

\FloatBarrier

\subsection{ImageNet-100 Classes}

To train our model and the baseline, we use a randomly-chosen 100-class subset of the original ImageNet dataset. This subset is the same as used in \citep{contrastive_multiview}. We present the \texttt{wnid} of each of the classes in the subset in Table~\ref{tab:imagenet-100}

\begin{table}[ht]
\centering
\caption{\small \textbf{List of ImageNet-100 Classes.} We present the \texttt{wnid} of each of the classes used in ImageNet-100. These classes are randomly sampled from the original ImageNet dataset and are the same classes used in \citep{contrastive_multiview}.}
\label{tab:imagenet-100}
\begin{tabular}{@{}ccccc@{}}
\toprule
\multicolumn{5}{c}{ImageNet-100 Classes}                  \\ \midrule
n02869837 & n02086910 & n03785016 & n02483362 & n03837869 \\
n01749939 & n02859443 & n03764736 & n04127249 & n03494278 \\
n02488291 & n13040303 & n03775546 & n02089973 & n04136333 \\
n02107142 & n03594734 & n02087046 & n03017168 & n03794056 \\
n13037406 & n02085620 & n07836838 & n02093428 & n03492542 \\
n02091831 & n02099849 & n04099969 & n02804414 & n02018207 \\
n04517823 & n01558993 & n04592741 & n02396427 & n04067472 \\
n04589890 & n04493381 & n03891251 & n04418357 & n03930630 \\
n03062245 & n02109047 & n02701002 & n02172182 & n03584829 \\
n01773797 & n04111531 & n03379051 & n01729322 & n02123045 \\
n01735189 & n02877765 & n02259212 & n02113978 & n04229816 \\
n07831146 & n04429376 & n07715103 & n03787032 & n02100583 \\
n07753275 & n02009229 & n03947888 & n02089867 & n03642806 \\
n03085013 & n01978455 & n04026417 & n02119022 & n04336792 \\
n04485082 & n02106550 & n02326432 & n03777754 & n03259280 \\
n02105505 & n01820546 & n03637318 & n04238763 & n02116738 \\
n01983481 & n01692333 & n01980166 & n02231487 & n02108089 \\
n02788148 & n07714571 & n02113799 & n03032252 & n03424325 \\
n03530642 & n02974003 & n02086240 & n02138441 & n01855672 \\
n04435653 & n02114855 & n03903868 & n02104029 & n02090622 \\
\bottomrule
\end{tabular}
\end{table}

\subsection{ImageNet-100B Classes}

For the transfer learning experiments, we select a random subset of 100 classes from ImageNet which are mutually exclusive with the classes found in ImageNet-100. We term this subset ImageNet-100B, and present the \texttt{wnid} of each of the classes used in this subset in Table~\ref{tab:imagenet-100B}. 

\begin{table}[ht]
\centering
\caption{\small \textbf{List of ImageNet-100B Classes.} We present the \texttt{wnid} of each of the classes used in ImageNet-100B. These classes are randomly sampled from the original ImageNet dataset and are mutually exclusive with the classes in ImageNet-100.}
\label{tab:imagenet-100B}
\begin{tabular}{@{}ccccc@{}}
\toprule
\multicolumn{5}{c}{ImageNet-100B Classes}                 \\ \midrule
n02088364 & n02840245 & n04258138 & n03670208 & n02013706 \\
n03000134 & n01688243 & n02280649 & n03483316 & n02797295 \\
n03544143 & n03920288 & n02492660 & n02777292 & n04366367 \\
n03388043 & n02488702 & n03782006 & n03602883 & n03857828 \\
n02165105 & n03884397 & n03495258 & n03982430 & n04243546 \\
n02321529 & n07745940 & n04254120 & n02808440 & n03891332 \\
n01819313 & n01484850 & n02391049 & n03207743 & n03796401 \\
n03187595 & n04147183 & n04254777 & n02096177 & n03314780 \\
n01667114 & n04356056 & n07716906 & n01742172 & n04039381 \\
n02097130 & n01644900 & n03888605 & n03792782 & n01498041 \\
n02104365 & n02132136 & n01843065 & n01795545 & n01990800 \\
n02279972 & n02999410 & n02643566 & n03534580 & n03976657 \\
n12985857 & n02457408 & n04515003 & n03814906 & n02107683 \\
n01773549 & n04540053 & n03125729 & n02342885 & n02229544 \\
n12057211 & n02776631 & n04179913 & n03692522 & n03063599 \\
n02791270 & n02107574 & n03788365 & n03272010 & n03127747 \\
n01491361 & n02930766 & n02098286 & n09468604 & n03179701 \\
n02169497 & n04263257 & n02109525 & n03720891 & n03016953 \\
n02793495 & n03724870 & n02966193 & n03717622 & n09193705 \\
n02281787 & n04081281 & n03929660 & n02177972 & n04033901 \\
\bottomrule
\end{tabular}
\end{table}

\FloatBarrier
\section{Visualization of Distortions}

In this section, we present several examples of the fixed distortions we apply during training and testing. All images are taken from the ImageNet-100 validation set.

\begin{figure*}[ht]%
    \centering
    \subfloat{{\includegraphics[width=0.47\linewidth]{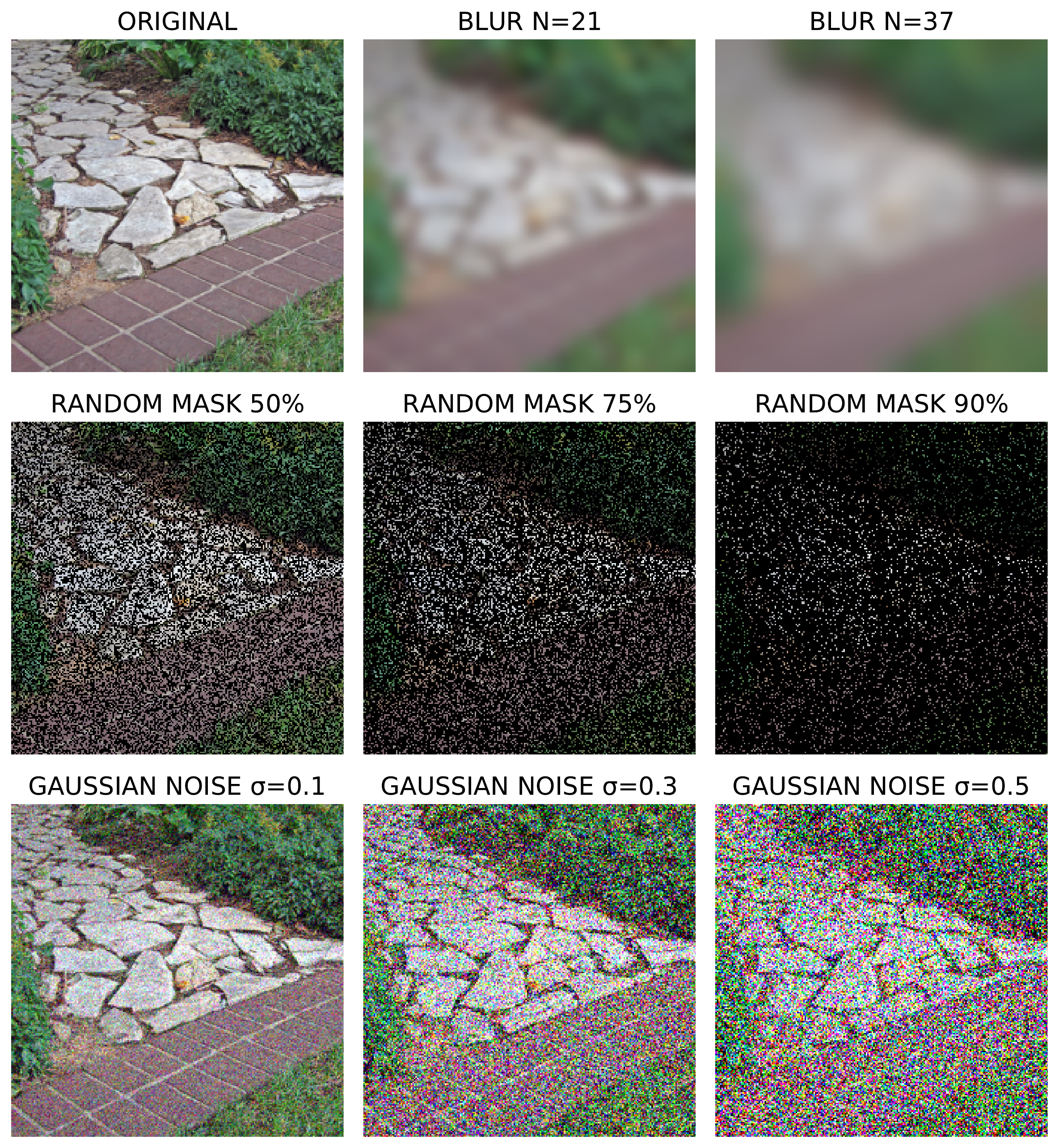} }}\;
    \subfloat{{\includegraphics[width=0.47\linewidth]{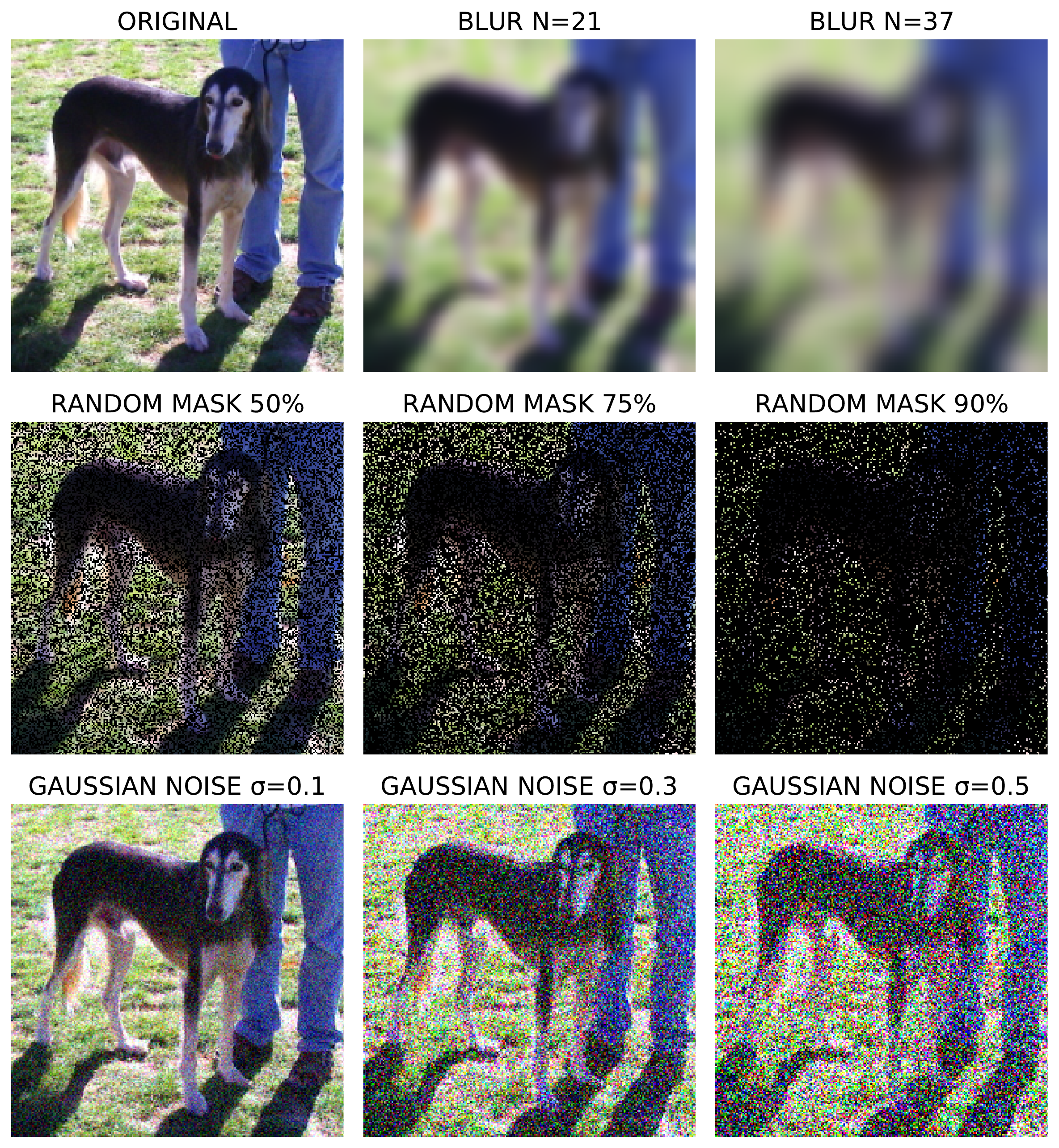} }}\\
    \subfloat{{\includegraphics[width=0.47\linewidth]{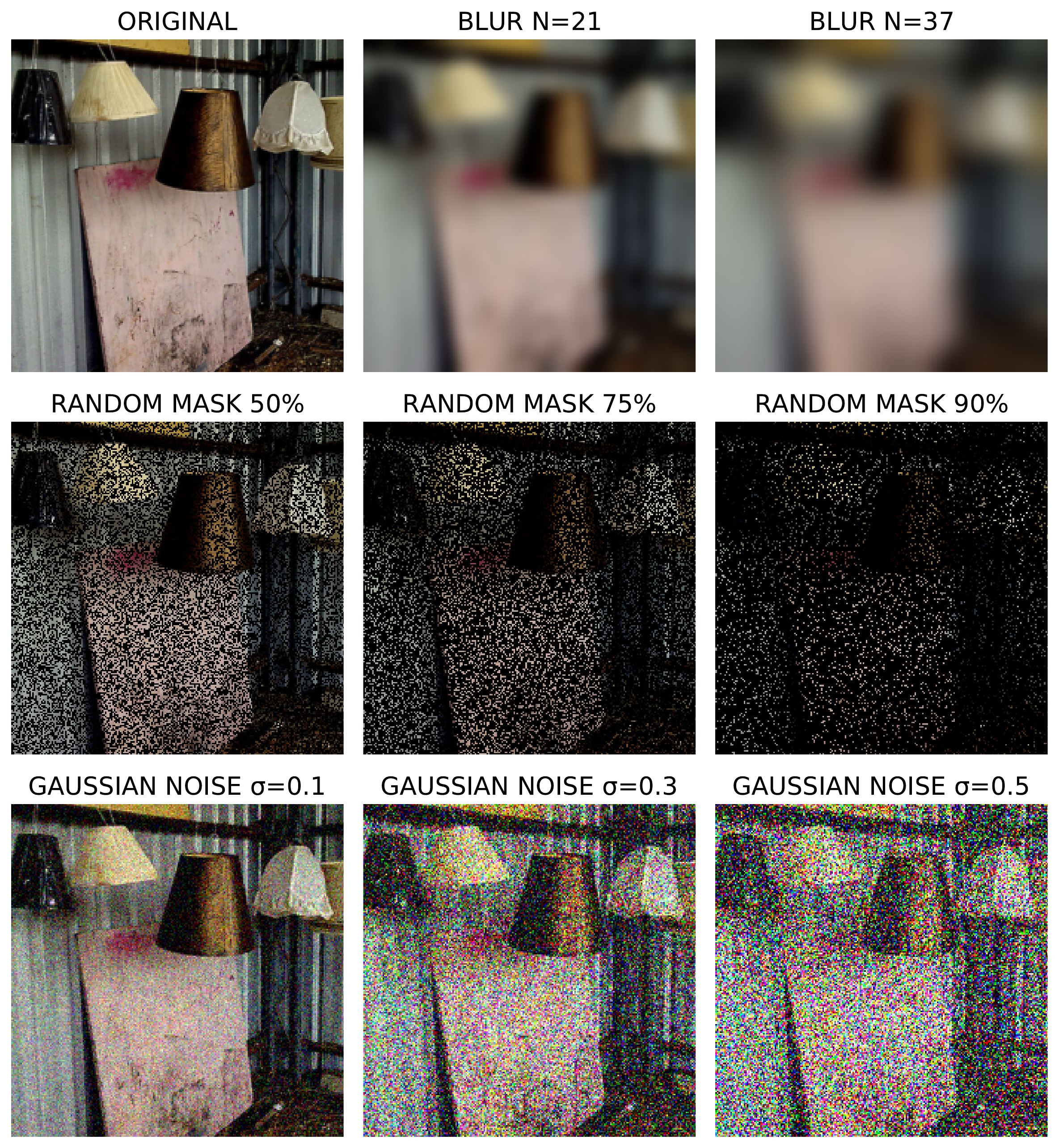} }}\;
    \subfloat{{\includegraphics[width=0.47\linewidth]{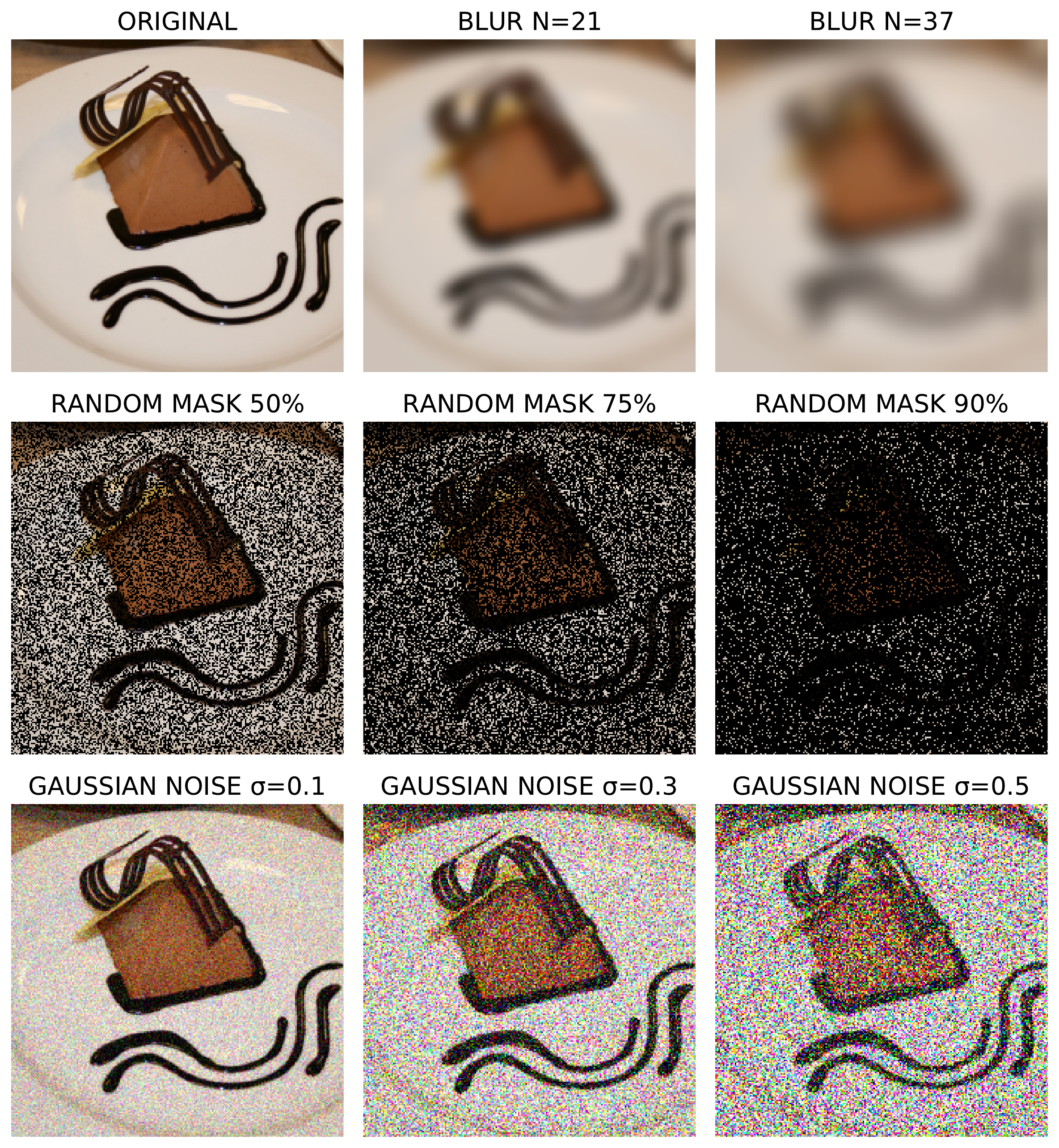} }}\\
    \caption{\small \textbf{Visualization of the various fixed distortions we test.}}
    \label{fig:distort_1}
\end{figure*}

\end{document}